\documentclass[11pt]{article}

\usepackage{microtype}
\usepackage{graphicx}
\usepackage{subcaption}
\usepackage{booktabs} 
\usepackage{amssymb,amsmath,amsthm,sectsty,url}
\usepackage[letterpaper,hmargin=1.0in,vmargin=1.0in]{geometry}
\usepackage[pdftex,colorlinks,linkcolor=blue,citecolor=blue,filecolor=blue,urlcolor=blue]{hyperref}
\usepackage{cleveref}
\usepackage{color}
\usepackage[boxed]{algorithm}
\usepackage[labelfont=bf]{caption}
\usepackage{tikz}
\usepackage{graphicx}
\usepackage{nicefrac}
\usepackage{aliascnt}

\usepackage{float}
\usepackage{pgfplots}
\usepackage{pgfplotstable}
\pgfplotsset{compat=1.18}
\usepgfplotslibrary{fillbetween}

\newtheorem{theorem}{Theorem}[section]
\crefname{theorem}{Theorem}{Theorems}

\newaliascnt{lemma}{theorem}
\newtheorem{lemma}[lemma]{Lemma}
\aliascntresetthe{lemma}
\crefname{lemma}{Lemma}{Lemmas}

\newaliascnt{proposition}{theorem}

\aliascntresetthe{proposition}
\crefname{proposition}{Proposition}{Propositions}

\newaliascnt{corollary}{theorem}

\aliascntresetthe{corollary}
\crefname{corollary}{Corollary}{Corollaries}

\newaliascnt{fact}{theorem}
\newtheorem{fact}[fact]{Fact}
\aliascntresetthe{fact}
\crefname{fact}{Fact}{Facts}

\newaliascnt{definition}{theorem}
\newtheorem{definition}[definition]{Definition}
\aliascntresetthe{definition}
\crefname{definition}{Definition}{Definitions}

\newaliascnt{remark}{theorem}

\aliascntresetthe{remark}
\crefname{remark}{Remark}{Remarks}

\newaliascnt{conjecture}{theorem}

\aliascntresetthe{conjecture}
\crefname{conjecture}{Conjecture}{Conjectures}

\newaliascnt{claim}{theorem}

\aliascntresetthe{claim}
\crefname{claim}{Claim}{Claims}

\newaliascnt{question}{theorem}

\aliascntresetthe{question}
\crefname{question}{Question}{Questions}

\newaliascnt{exercise}{theorem}

\aliascntresetthe{exercise}
\crefname{exercise}{Exercise}{Exercises}

\newaliascnt{example}{theorem}

\aliascntresetthe{example}
\crefname{example}{Example}{Examples}

\newaliascnt{notation}{theorem}

\aliascntresetthe{notation}
\crefname{notation}{Notation}{Notations}

\newaliascnt{problem}{theorem}

\aliascntresetthe{problem}
\crefname{problem}{Problem}{Problems}

\newcommand{\norm}[1]{\lVert#1\rVert}

\def\E{\mathbb E}

\newcommand{\N}{\mathbb N}
\newcommand{\R}{\mathbb R}
\newcommand{\Z}{\mathbb Z}

\usepackage{stix}
\newcommand{\alibi}{L^{\star}}

\newcommand{\kk}{\mathrm{ker}}

\newcommand*\samethanks[1][\value{footnote}]{\footnotemark[#1]}

\title{Positional LSH: \\
Binary Block Matrix Approximation for Attention with Linear Biases}

\author{
  Daniel Wolfson\thanks{Blavatnik School of Computer Science and AI, Tel Aviv University.
  \texttt{\{wolfson2@mail,talwag@tauex\}.tau.ac.il}
  }
  \and 
  Tal Wagner\samethanks
}

\date{} 

\begin{document}
\maketitle

\begin{abstract}
Positional encoding in transformers is commonly implemented through positional embeddings, attention masks, or bias terms, but formal connections between these mechanisms remain limited. We study attention with positional bias through the lens of locality-sensitive hashing (LSH), focusing on Attention with Linear Biases (ALiBi). We show that the ALiBi bias matrix is the expectation of contiguous block-diagonal binary masks induced by a ``positional LSH'' scheme. The empirical mean of masks sampled from this scheme yields spectral norm and max-norm approximation guarantees with bounded block sizes with high probability. This structural theorem implies a uniform approximation theorem for ALiBi-biased attention: with high probability over the sampled masks, the approximate attention output is accurate simultaneously for all query-key-value inputs and can be computed in near-linear time in the context length, reducing long-context ALiBi to a collection of randomized short-context regular (positionally unbiased) attention operations. Conceptually, this connects positional bias, masks, and positional embeddings in a single formal framework and suggests an approach to efficient ALiBi-biased attention. Experiments on large language models validate our theoretical findings.
\end{abstract}

\section{Introduction}

Although the Transformer architecture has remained remarkably intact since its introduction in \cite{vaswani2017attention}, even through widespread adoption in large language models, encoding token positions stands out as a still evolving aspect. The challenge is to encode how the structural positions of tokens in the input inform the way they attend to each other. A closely intertwined challenge is long-context attention: attending to all tokens in a long context becomes computationally difficult or infeasible, whereas limiting attention to structurally local windows often degrades long-range reasoning performance. 

Three leading paradigms in positional encoding, often used in conjunction, are as follows:
\begin{itemize}
    \item \emph{Positional embedding}, where each position in the input is endowed with an embedding vector designed to encode its location -- e.g., sinusoidal \cite{vaswani2017attention} or rotary position embeddings (RoPE) \cite{su2024roformer};
    \item \emph{Attention with masks}, which allows tokens to attend to some tokens but not to others through a fixed binary mask -- e.g., sliding-window attention \cite{beltagy2020longformer,jelassi2024repeat};
    \item \emph{Attention with bias}, which modifies attention weights a posteriori according to token positions -- e.g., Attention with Linear Biases (ALiBi) \cite{press2021train}. 
\end{itemize}

In this work, we undertake a theoretical study of attention with positional bias, focusing on ALiBi, a prominent method that has emerged from empirical research \cite{press2021train} and has been widely implemented and adopted in popular models \cite{pli2024alibi,wuflashbias}. 
We study it through the lens of \emph{locality sensitive hashing} (LSH), a central paradigm in the theory of efficient algorithms. 
This leads us to uncover fundamental connections between ALiBi and LSH with implications to long-context attention. 

Our main structural result, \Cref{thm:main}, shows that the ALiBi bias matrix can be approximated arbitrarily well by a linear combination of binary contiguous block-diagonal matrices. 
Conceptually, this draws a formal connection between the three main paradigms in positional encoding: bias, masks and embeddings. Namely, it suggests that attention with bias can be approximated by attention with masks, where the masks are induced by locality-sensitive hash collisions of positional embeddings. In the case of ALiBi, this connection holds in a strong formal sense. 

Our main algorithmic result, \Cref{thm:alg}, is an efficient approximation theorem for ALiBi-biased attention over long contexts, which is based on our structural result.
It suggests an approach to long-context ALiBi through a reduction to a collection of randomized short-context regular (positionally unbiased) attention operations, which correspond to the contiguous binary blocks in \Cref{thm:main}. This yields a near-linear time algorithm for ALiBi with a provable approximation guarantee, addressing the computational bottleneck in long-context ALiBi, a challenge called out in \cite{press2021train}.

Conceptually, our contribution goes beyond applying classical LSH to positions in attention: it is in reinterpreting positional bias through hashing on positions, drawing a formal connection between positional embeddings, binary attention masks, and bias terms. Technically, while kernel approximation via LSH is standard, we introduce novel technical tools in order to prove matrix convergence in the spectral norm and high-probability block size control. These results are specialized to attention with ALiBi and do not follow from generic LSH machinery alone. Our proofs draw on results from matrix concentration theory, Fourier analysis of Toeplitz matrices, and sub-gamma tail bounds. 

We validate our theoretical findings through experiments on publicly available large language models.

\section{Our Results in Detail}\label{sec:results}

We start by fixing notation to be used throughout. 
We use ``$\odot$'' for element-wise matrix product.
We denote the following norms for a matrix $M$: $\norm{M}$ for the spectral norm; $\norm{M}_{\max}=\max_{ij}|M_{ij}|$;  
and $\norm{M}_{2,\infty}=\max_i\sqrt{\sum_jM^2_{ij}}$. 
We let $D^{[M]}$ denote the diagonal row-sum matrix $D^{[M]}_{ii}=\sum_{j=1}^nM_{ij}$.
Finally, we will use $J\in\R^{n\times n}$ for the fixed lower-triangular all-1 matrix, $J_{ij}=\mathbf1\{j\leq i\}$.

\paragraph{Attention.}
Let $n$ be the attention context length. Let $Q,K\in\R^{n\times d}$ be the query and key matrices with rows $\{q_i\},\{k_j\}$ respectively. The unnormalized dot-product attention matrix $A\in\R^{n\times n}$ has entries  
$A_{ij}=\exp(q_i^Tk_j/\sqrt d)$.
The row-wise normalized non-causal attention matrix is $P=(D^{[A]})^{-1}A$. 
In the causal attention case we instead have $P=(D^{[A\odot J]})^{-1}(A\odot J)$. 
Given a value matrix $V\in\R^{n\times d'}$, the output of the attention operation without positional bias is $T=PV$.

\paragraph{ALiBi.}
The ALiBi matrix $\alibi$ is defined in \cite{press2021train} as $\alibi_{ij} = e^{-|i-j|/\sigma}$, where $\sigma>0$ is a bandwidth hyperparameter which is fixed per head and varies between heads. 
The unnormalized ALiBi-biased attention weights are $A^\star = A\odot\alibi$ in the non-causal case and $A^\star = A\odot J\odot\alibi$ in the causal case. 
In either case, the normalized ALiBi-biased attention matrix is $P^\star=(D^{[ A^\star]})^{-1}A^\star$, and the goal is to compute the output $T^\star = P^\star V$. 

The following theorem is our main structural result: an approximation of the ALiBi matrix by random contiguous block-diagonal binary matrices. 

\begin{theorem}\label{thm:main}
There exists a distribution $\mathcal M$ over contiguous block-diagonal binary $n\times n$ matrices, where each diagonal block is a square all-1 matrix, such that $\E_{M\sim\mathcal M}[M]=\alibi$. 
Let $M_1,\ldots,M_s\sim\mathcal M$ be i.i.d.~samples and let $\widetilde M=\frac1s\sum_{i=1}^sM_i$ be their empirical mean. Then we have, 
\begin{itemize}
\item Spectral norm expectation:
\begin{equation}\label{eq:spectral_expectation}
  \E\norm{\alibi-\widetilde M} \leq 
  C\Psi_\sigma\left(\sqrt{\frac{\log n}{s}} + \frac{\log n}{s}\right)
\end{equation}
\item Spectral norm concentration:
\begin{equation}\label{eq:spectral_concentration}
  \forall\varepsilon>0,\;\;\Pr[\norm{\alibi-\widetilde M} \geq \varepsilon] \leq  n\cdot\exp\left(-Cs\min\left\{\frac{\varepsilon^2}{\Psi_\sigma^2}, \frac{\varepsilon}{\Psi_\sigma}\right\}\right)
\end{equation}
\item Max-norm concentration:
\begin{equation}\label{eq:maxnorm_concentration}
  \forall\varepsilon>0,\quad\Pr[\norm{\alibi-\widetilde M}_{\max} \geq \varepsilon] \leq  2n^2e^{-2s\varepsilon^2}
\end{equation}
\end{itemize}
where $\Psi_{\sigma} = 1 + \sigma + \frac{e^{1/\sigma}+1}{e^{1/\sigma}-1}$ and $C>0$ is a universal constant. 
Furthermore, let $b_{\max}$ be the maximum size of a diagonal block in any of the samples $M_1,\ldots,M_s$, then, for all $\delta\in(0,1/e)$,
\begin{equation}\label{eq:thm_blocksize} 
        \Pr[b_{\max} > 1+ \sigma(3\ln(s/\delta) +2)] < \delta . 
        \end{equation}
\end{theorem}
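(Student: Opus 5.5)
The distribution $\mathcal M$ comes from a one-dimensional LSH on positions. Sample a Poisson point process of rate $1/\sigma$ on $\R$ and place a cut between positions $i$ and $i+1$ whenever a point of the process lies in $(i,i+1)$. Equivalently, cut each of the $n-1$ gaps independently with probability $p=1-e^{-1/\sigma}$. Let $M_{ij}=1$ exactly when no cut separates $i$ and $j$. Then $M$ is contiguous block-diagonal with all-1 square blocks. Since the $|i-j|$ gaps between $i$ and $j$ are cut independently,
\[
\E M_{ij}=(1-p)^{|i-j|}=e^{-|i-j|/\sigma}=\alibi_{ij}.
\]

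\textbf{Max-norm concentration.} Each entry $M_{ij}\in\{0,1\}$, so Hoeffding gives $\Pr[|\widetilde M_{ij}-\alibi_{ij}|\ge\varepsilon]\le 2e^{-2s\varepsilon^2}$. A union bound over the $n^2$ entries gives \eqref{eq:maxnorm_concentration}.

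\textbf{Spectral bounds.} I will apply matrix Bernstein to the centered i.i.d.\ symmetric summands $X_k=(M_k-\alibi)/s$. Two quantities are needed.
\begin{itemize}
\item \emph{Uniform norm bound.} We have $\norm{\alibi}\le\sup_\theta\sum_{m\in\Z}e^{-|m|/\sigma}e^{im\theta}=\frac{e^{1/\sigma}+1}{e^{1/\sigma}-1}$. This is the Fourier symbol of the Toeplitz matrix evaluated at $\theta=0$, and it is where that term of $\Psi_\sigma$ comes from. Also $\norm{M}$ equals the maximum block size, which is unbounded. So the plain bounded Bernstein inequality does not apply.
\item \emph{Variance proxy.} Since $M^2=\sum_B |B|\,\mathbf 1_B\mathbf 1_B^T$, we get $\norm{\E M^2}\le\max_i\sum_j\E[|B(i)|\,\mathbf 1\{j\in B(i)\}]=\max_i\E|B(i)|^2$. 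The block containing $i$ has size $1+G_1+G_2$, where $G_1,G_2$ are truncated geometric variables with mean about $\sigma$. Hence $\E|B(i)|^2=O((1+\sigma)^2)$, which gives variance of order $\Psi_\sigma^2$.
\end{itemize}

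To handle the unbounded norm, I will use the subexponential (moment) version of matrix Bernstein: if $\E X^p\preceq \frac{p!}{2}R^{p-2}\Sigma$, then
\[
\Pr[\lambda_{\max}\ge t]\le n\exp\bigl(-t^2/(2(\sigma^2+Rt))\bigr).
\]
For this I need $\E M^p\preceq \E[b_i^{p-1}]$-type bounds. Indeed $M^p=\sum_B|B|^{p-1}\mathbf 1_B\mathbf 1_B^T$, so the row sums are bounded by $\E|B(i)|^p$. The block size is a sum of two geometric variables, hence sub-gamma with scale $O(1+\sigma)$. This gives $\E|B(i)|^p\le p!\,C^p(1+\sigma)^p$, and adding the centering contributes the $\norm{\alibi}$ term. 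Plugging $\Sigma,R\lesssim\Psi_\sigma$ (rescaled by $1/s$) into the inequality gives \eqref{eq:spectral_concentration}. Integrating the tail gives \eqref{eq:spectral_expectation}.

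\textbf{Block size.} A block of size greater than $1+L$ requires $L$ consecutive uncut gaps. The probability of this is $e^{-L/\sigma}$ for a given starting position. A naive union bound over start positions would introduce a $\log n$ factor, but the target bound has no $n$ in it. To avoid this, I will bound $\Pr[b_{\max}>1+L]$ through the block containing a fixed reference point, made shift-invariant via the Poisson process on $\R$. Alternatively, bound the expected number of long blocks: each long block is preceded by a cut or starts at position 1, giving $\E\#\{\text{blocks of length}>L\}\le (1+np)e^{-L/\sigma}$. This still depends on $n$. So I will instead aim to exploit the sub-gamma tail of the block covering a point, and use the explicit constants $3\ln(s/\delta)+2$ to absorb the residual terms, with a union bound over the $s$ samples.

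\textbf{Main obstacle.} I expect two difficulties: the moment condition for matrix Bernstein with unbounded summands, and obtaining the $n$-free block-size bound. For the latter I will first check whether the intended statement implicitly allows dependence via $\delta$. If it does not, I will use a size-biased argument based on the block containing a uniformly random position.
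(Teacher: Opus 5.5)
There is a genuine gap, and it comes from your choice of $\mathcal M$. Your independent-cuts (Poisson) distribution does satisfy $\E M=\alibi$, and the max-norm bound follows as you say. It cannot satisfy the block-size bound \eqref{eq:thm_blocksize}, however, because for this distribution that bound is false. The largest block in a single sample is determined by the longest run of uncut gaps among $n-1$ independent gaps, and this grows without bound in $n$.

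Concretely, fix $T=1+\sigma(3\ln(s/\delta)+2)$ and $k=\lceil T\rceil$, and split the $n-1$ gaps into $\lfloor (n-1)/k\rfloor$ disjoint windows of $k$ consecutive gaps. Each window is entirely uncut with probability $e^{-k/\sigma}$, independently of the others. An uncut window produces a block of size $k+1>T$. Hence $\Pr[b_{\max}>T]\ge 1-(1-e^{-k/\sigma})^{\lfloor (n-1)/k\rfloor}\to 1$ as $n\to\infty$ with $s,\delta,\sigma$ fixed.

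Your proposed rescues, via the block containing a fixed or uniformly random position, cannot fix this. Those arguments control a typical (size-biased) block, not the maximum over roughly $np$ blocks. And since the theorem asserts a single distribution satisfying all four conclusions at once, you cannot switch distributions just for the last claim.

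The missing idea is to make all blocks within one sample share a common random length. Take each sample to be a randomly shifted regular grid of a single random width: random binning with $b\sim\Gamma(2,\sigma)$ and shift $c\sim U[0,b]$. Given $b$, two positions at distance $x$ collide with probability $(1-x/b)_+$. Since $\int_x^\infty(1-x/b)\,\sigma^{-2}be^{-b/\sigma}\,db=e^{-x/\sigma}$, you still get $\E M=\alibi$ with contiguous blocks. Now every block of a sample has size at most $\lceil b\rceil\le b+1$, so $b_{\max}\le 1+\max_k b_k$. The $(2\sigma^2,\sigma)$ right-subgamma tail of $\Gamma(2,\sigma)$ with $t=\ln(s/\delta)\ge 1$ (so that $2\sigma\sqrt t\le 2\sigma t$), plus a union bound over the $s$ samples, then gives \eqref{eq:thm_blocksize} with no dependence on $n$.

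This choice also simplifies your spectral step. Here $\norm{M}$ equals the largest block, so $\E[(\norm{M}-1)^k]\le\E b^k=\sigma^k(k+1)!$. The crude bound $\norm{\E(M-\alibi)^k}\le\E(\norm{M}+\norm{\alibi})^k$ then already yields the Bernstein moment condition with $r,\alpha=O(\Psi_\sigma)$. Under your distribution $\norm{M}$ grows with $n$, which forces you into the row-sum route. There, the centering step (``adding the centering contributes the $\norm{\alibi}$ term'') is only asserted: the mixed words in $(M-\alibi)^p$ couple block sizes at different positions and would need an actual argument.
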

Structurally, the theorem states that the ALiBi matrix is a convex combination of contiguous non-overlapping local-window attention patterns, each positionally unbiased. 
Approximation-wise, it implies that with high probability, $s\approx\varepsilon^{-2}\sigma^{2}\log n$ samples from $\mathcal M$ suffice to approximate ALiBi using diagonal blocks of size at most $b_{\max}\approx\sigma\log\log n$ (suppressing logarithmic factors in $\varepsilon^{-1},\delta^{-1},\sigma$ for simplicity). Regarding the values of $\sigma$ and $\Psi_\sigma$, we recall that Press et al.~\cite{press2021train} set $\sigma$ as a geometric series from $2^{8/n_h}$ to $2^8$ across $n_h$ heads. Under this setting, $\Psi_\sigma$ satisfies $2<\Psi_\sigma<3\sigma+2$. 

Algorithmically, \Cref{thm:main} suggests that ALiBi-biased attention can be approximated efficiently over long contexts by a reduction to a collection of small local attention windows. 
Our next theorem formalizes this intuition and yields an approximation guarantee for the output attention weights.

\begin{theorem}[uniform ALiBi-biased attention approximation]\label{thm:alg}
Let $\delta\in(0,1/e)$ and $s\geq\log(n/\delta)$.
Let $\widetilde M=\frac1s\sum_{i=1}^sM_i$ be the empirical mean of $s$ samples from $\mathcal M$ as in \Cref{thm:main}.  
Given an attention instance $(Q,K,V)$, let $A,A^\star,P$ be the corresponding attention weights as above (in either the causal or non-causal variant), and denote:
\[
  \Delta_P = C\min\{\Psi_\sigma\norm{P}_{2,\infty},1\}\sqrt{\frac{\log(n/\delta)}{s}} \quad \text{and} \quad \beta_P^\star=\min_i\sum_{j=1}^nP_{ij}\alibi_{ij} 
\]
where $C>0$ is a universal constant.
Then, with probability $1-\delta$ over the draw of $\widetilde M$, the following hold simultaneously for all input attention instances:
\begin{enumerate}
    \item The approximate ALiBi-biased attention output $\widetilde T^\star$, obtained by replacing $L^\star$ with $\widetilde M$, can be computed in time $O(nd(\sigma s\log(s/\delta)+1))$. 
    \item Whenever $\Delta_P<\beta^\star_P$, it holds that
    $\norm{T^\star - \widetilde T^\star}_{\max} \leq \frac{2\Delta_P}{\beta_P^\star-\Delta_P}\norm{V}_{\max}$.
\end{enumerate}
\end{theorem}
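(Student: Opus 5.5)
We prove the two items by separate arguments, conditioning throughout on one ``good'' event over the draw of $M_1,\ldots,M_s$. The event is the intersection of two events from \Cref{thm:main}. The first is the block-size event \eqref{eq:thm_blocksize}, $b_{\max}\le 1+\sigma(3\ln(2s/\delta)+2)$. The second is the max-norm event from \eqref{eq:maxnorm_concentration}, $\norm{\alibi-\widetilde M}_{\max}\le\sqrt{\log(4n^2/\delta)/(2s)}$, possibly strengthened by a row-wise spectral-type event described below. Each part gets failure probability $\delta/2$, so the good event has probability at least $1-\delta$. The event depends only on the masks, not on $(Q,K,V)$, and this is what makes the guarantee uniform over all attention instances.

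\textbf{Running time.} Since $\widetilde M=\frac1s\sum_t M_t$, we have
\[
A\odot\widetilde M=\frac1s\sum_t A\odot M_t,
\]
and similarly after masking by $J$. Hence both the numerator $(A\odot\widetilde M)V$ and the row sums $D^{[A\odot\widetilde M]}$ (computed as $(A\odot\widetilde M)\mathbf 1$) decompose into $s$ block-diagonal computations. For a single mask with blocks of sizes $b_1,\ldots,b_m$ summing to $n$, the required entries of $A$ and their products with $V$ cost $O(d\sum_k b_k^2)\le O(nd\,b_{\max})$; here we take $d'\le d$, or else absorb $d'$ into the bound. Summing over the $s$ masks and plugging in $b_{\max}=O(\sigma\log(s/\delta)+1)$ gives $O(nd(\sigma s\log(s/\delta)+s))$. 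To remove the stray $s$ term and reach the stated $O(nd(\sigma s\log(s/\delta)+1))$, we use $\sigma\gtrsim 1$ in the relevant regime, or more carefully the fact that $\sum_k b_k^2\le n\cdot b_{\max}$ with $b_{\max}\ge1$ already bounded by $1+\sigma(\cdot)$. The final normalization step costs $O(nd')$.

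\textbf{Error bound.} Write $a_i^\star=\sum_j A_{ij}\alibi_{ij}$ and $\widetilde a_i=\sum_j A_{ij}\widetilde M_{ij}$ (with the causal mask included where relevant). Divide both by $D^{[A]}_{ii}$ to get the normalized quantities
\[
\beta_i=\sum_j P_{ij}\alibi_{ij}\ge\beta^\star_P,\qquad \widetilde\beta_i=\sum_j P_{ij}\widetilde M_{ij}.
\]
Row $i$ of $T^\star$ is then $\frac{1}{\beta_i}\sum_j P_{ij}\alibi_{ij}v_j$, and row $i$ of $\widetilde T^\star$ is the same expression with $\widetilde M$ in place of $\alibi$. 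The standard ratio perturbation bound states that if $|u-\widetilde u|\le\Delta\norm{V}_{\max}$, $|\beta-\widetilde\beta|\le\Delta$ and $|u|\le\beta\norm V_{\max}$, then
\[
\left|\frac u\beta-\frac{\widetilde u}{\widetilde\beta}\right|\le\frac{2\Delta}{\beta-\Delta}\norm V_{\max}.
\]
This yields item 2, provided we show for every row $i$ and every coordinate $c$ that
\[
\Bigl|\sum_j P_{ij}(\alibi_{ij}-\widetilde M_{ij})x_j\Bigr|\le\Delta_P\norm{x}_\infty\quad\text{for } x=\mathbf 1 \text{ and } x=V_{:,c}.
\]
The ``$1$'' branch of the $\min$ in $\Delta_P$ follows from the max-norm event together with $\sum_jP_{ij}=1$. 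The ``$\Psi_\sigma\norm P_{2,\infty}$'' branch comes from
\[
\Bigl|\sum_j P_{ij}E_{ij}x_j\Bigr|=|e_i^T(E\odot(\mathbf 1 p_i^T))x|,
\]
where $E=\alibi-\widetilde M$ and $p_i$ is row $i$ of $P$. We bound this by Cauchy--Schwarz as $\norm{p_i}_2\cdot\norm{(E_{ij}x_j)_j}_2$, which reduces the problem to controlling row norms of $E$ weighted by $x$.

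\textbf{Main obstacle.} The difficulty is making this second branch hold uniformly over all $x$ and all $P$ while paying only $\Psi_\sigma\sqrt{\log(n/\delta)/s}$. The plan is to view row $i$ of $E$ as an average of $s$ i.i.d. centered vectors $\alibi_{i\cdot}-(M_t)_{i\cdot}$. These are indicators of the block containing $i$, whose lengths have exponential (sub-gamma) tails with scale $\sigma$. Applying a vector Bernstein inequality in $\ell_2$, with $\E\norm{(M_t)_{i\cdot}}_2^2=\sum_j\alibi_{ij}\le\Psi_\sigma$, and taking a union bound over the $n$ rows gives $\norm{E}_{2,\infty}\lesssim\Psi_\sigma\sqrt{\log(n/\delta)/s}$ under $s\ge\log(n/\delta)$; this is also where that assumption is needed, since it suppresses the linear Bernstein term. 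Since $\norm{x}_\infty\le1$ after normalization, the weighted row norm satisfies $\norm{(E_{ij}x_j)_j}_2\le\norm{E_{i\cdot}}_2$, so the $\Psi_\sigma\norm{P}_{2,\infty}$ branch follows. Finally we adjust constants and union-bound this row-norm event with the block-size event.
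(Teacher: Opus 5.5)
Your proposal is correct. Apart from one step it follows the paper: the block-wise running-time argument with the block-size bound, the ``$1$'' branch of $\Delta_P$ from the max-norm event and $\sum_k P_{ik}=1$, and the ratio perturbation (the paper's \eqref{eq:almostthmalg} with $\widetilde D_i\ge\beta_P^\star-\Delta_P$) are the same.

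The genuine difference is the $\Psi_\sigma\norm{P}_{2,\infty}$ branch.
\begin{itemize}
\item The paper obtains it from the global spectral concentration \eqref{eq:spectral_concentration} of \Cref{thm:main}, which rests on matrix Bernstein, Gamma moments, and the Toeplitz bound on $\norm{\alibi}$. It then uses only that every row of $E=\alibi-\widetilde M$ has $\ell_2$ norm at most $\norm{E}$.
\item You instead bound $\norm{E}_{2,\infty}$ directly, with a vector (Hilbert-space) Bernstein inequality for each row and a union bound over the $n$ rows.
\end{itemize}
Your route works. The increments $\alibi_{i\cdot}-(M_t)_{i\cdot}$ are centered, with $\ell_2$ norm at most $\sqrt{\Psi_\sigma}+\sqrt{b_t+1}$. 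The Gamma moments in \Cref{fct:gamma} then give the Bernstein moment condition with variance parameter $O(\Psi_\sigma)$ and range parameter $O(\sqrt{\Psi_\sigma})$. You should write this check out rather than only invoking sub-gamma tails.

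Your route buys three things:
\begin{itemize}
\item It shows that \Cref{thm:alg} needs only $(2,\infty)$-norm control, not spectral-norm control.
\item It avoids matrix concentration entirely.
\item Since $\E\norm{(M_t)_{i\cdot}}_2^2=\sum_j\alibi_{ij}$, it actually gives $\norm{E}_{2,\infty}\lesssim\sqrt{\Psi_\sigma\log(n/\delta)/s}$, i.e.\ $\sqrt{\Psi_\sigma}$ in place of $\Psi_\sigma$ in $\Delta_P$.
\end{itemize}
The paper's route has the advantage that item~2 is an immediate corollary of the structural theorem, with no new probabilistic argument.

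Two small points.
\begin{itemize}
\item Your second (``more careful'') way of removing the extra $s$ term in the running time does not work: summing $nd\,b_{\max}$ over $s$ masks still gives $nds$. The paper's own argument also stops at $nds\,b_{\max}$, so the stated form implicitly uses $\sigma\ln(s/\delta)\gtrsim1$. This holds, e.g., for $\sigma\ge1$, since $\delta<1/e$; that is your first justification, and it is the right one.
\item You have three events (block size, max-norm, row norms), not two, so split $\delta$ three ways. The constants absorb this.
\end{itemize}
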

To unpack the statement, we note that the critical quantity $\beta_P^\star$ is the ratio between the minimum row denominator in $A^\star$ and $A$, that is, in the attention matrix before and after the ALiBi bias. It governs how sensitive the instance is to perturbations in the bias matrix. 
The term $\Delta_P$ which governs the error decays like $1/\sqrt s$ with the number of samples $s$, and may further improve with smaller $\norm{P}_{2,\infty}$, which occurs in instances where attention weights are more balanced. 
Importantly, the convergence in matrix norms in \Cref{thm:main} (\cref{eq:spectral_concentration,eq:maxnorm_concentration}) is what allows the approximation in \Cref{thm:alg} to hold uniformly --- that is, simultaneously for all attention inputs with high probability.

\section{Positional LSH}
\begin{figure}
     \includegraphics[width=\textwidth]{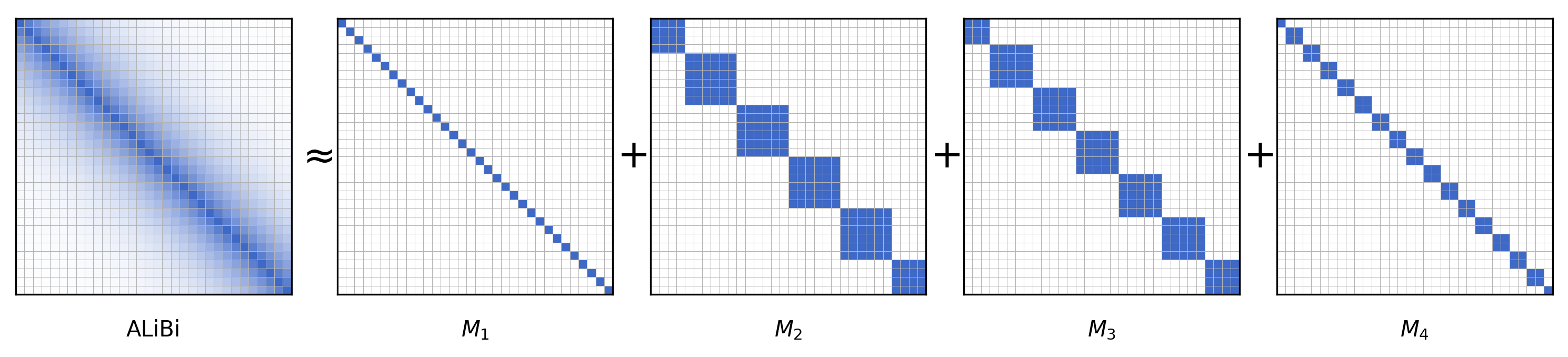}
 \caption{ALiBi approximation with binary block-diagonal matrices by positional LSH.}\label{fig:plsh}
\end{figure}

 In this section we present the positional LSH framework and prove \Cref{thm:main}.

\subsection{Attention with Bias via Locality Sensitive Hashing}\label{sec:plsh_generic}

Let $U$ be a space of objects and $\kk:U\times U\rightarrow[0,1]$ a similarity map (i.e., a kernel) over $U$. An LSH scheme for $\kk$ is a distribution $\mathcal H$ over hash functions operating on $U$ such that for all $u,u'\in U$
\begin{equation}\label{eq:lsh}
    \kk(u,u')=\Pr_{h\sim\mathcal H}[h(u)=h(u')] . 
\end{equation}
This notion was defined in classical works \cite{indyk1998approximate,charikar2002similarity} and has been widely influential, including recent applications to efficient attention computation over long contexts (see \Cref{sec:related}).
While prior work has applied LSH to the token embeddings of the keys and queries in the attention operation, here we will apply it exclusively to their positions in the sequence. Hence, we call this \emph{positional LSH}.

Let $L$ be a positional bias matrix that biases the original attention weights $A$ as $A\odot L$. Let $\{u_i\}_{i=1}^n$ be positional embeddings for the token positions in a context of length $n$. That is, $u_i$ is an object that encodes the location $i$ within the context. Suppose that $L$ has the form $L_{ij}=\kk(u_i,u_j)$. Furthermore, suppose that $\kk$ admits an LSH scheme $\mathcal H$. 
We may define a distribution $\mathcal M$ over $n\times n$ matrices where a sampled hash function $h\sim\mathcal H$ induces the random binary matrix $M^{(h)}$ given by
\[ M^{(h)}_{ij} = \begin{cases}
    1 & \text{if $h(u_i)=h(u_j)$} \\
    0 & \text{if $h(u_i)\neq h(u_j)$}.
\end{cases}\]
The LSH property (\ref{eq:lsh}) immediately implies $L=\E_{h\sim\mathcal H}[M^{(h)}]$.
Furthermore, $M^{(h)}$ has a distinct structure: its 1-entries are organized in non-overlapping principal combinatorial rectangles. Specifically, for each hash bin $\ell$ in the range of $h$, denote by $I_\ell=\{i:h(u_i)=\ell\}$ the indices that fall into that bin. 
$M^{(h)}$ has 1's in its $I_\ell\times I_\ell$ principal submatrix for every $\ell$, and 0's elsewhere. Equivalently, the rows and columns of $A$ can be permuted to make $M^{(h)}$ a block-diagonal matrix in which all blocks are all-1 submatrices. Note that without a permutation, the blocks $I_\ell$ need not necessarily be contiguous; it could be, for example, that $h(u_1)=h(u_3)\neq h(u_2)$, in which case indices $1$ and $3$ are hashed into the same bin while $2$ is hashed into a different bin. 

The identity $L=\E_{h\sim\mathcal H}[M^{(h)}]$ naturally suggests that $L$ can be approximated by the empirical average of i.i.d.~samples of $M^{(h)}$. See \Cref{fig:plsh} for illustration. 
This raises the question of the convergence rate to the mean. Entry-wise scalar convergence follows immediately from standard Chernoff-Hoeffding concentration, and this already yields the max-norm bound \cref{eq:maxnorm_concentration} in \Cref{thm:main}.
Matrix convergence in the spectral norm (\cref{eq:spectral_expectation,eq:spectral_concentration} in \Cref{thm:main}) is considerably more intricate and may not hold in general (see \Cref{sec:generality}). We will prove it for ALiBi with a specific positional LSH scheme. 

\subsection{Positional LSH for ALiBi via Random Binning Features}\label{sec:rbf}
ALiBi biases attention weights according to the similarity measure $L^{\star}_{ij}=\kk(i,j)=e^{-|i-j|/\sigma}$ between token positions $i,j$. This is the familiar one-dimensional Laplacian kernel. 
Note that the positional embedding object of each index $i$ is simply its integer value, $u_i=i$. 

LSH schemes for the Laplacian kernel are well-known \cite{rahimi2007random,andoni2009dimension,backurs2019space}. 
We will use the one due to Rahimi and Recht \cite{rahimi2007random}, which they termed Random Binning Features.

\begin{definition}[\cite{rahimi2007random}]\label{def:rbf}
The Random Binning Features (RBFs) LSH scheme over $\R$ is defined as follows. 
To sample $h\sim\mathcal H$, first draw $b\sim\Gamma(2,\sigma)$ from the Gamma distribution with shape 2 and scale $\sigma$. 
It is supported on $[0,\infty)$ with density $p(b)=\sigma^{-2}b e^{-b/\sigma}$. Then sample $c$
uniformly from $[0,b]$. The hash function $h=h_{b,c}$ partitions $\R$ into length-$b$ segments $\{[b\ell+c,b\ell+b+c):\ell\in\Z\}$ and hashes each number into the segment that contains it. Formally, 
$h(u)=\lfloor\frac{u-c}{b}\rfloor$ for every $u\in\R$.  
\end{definition}

\begin{lemma}[\cite{rahimi2007random}]\label{lmm:rbf}
    RBFs satisfy \cref{eq:lsh} for the Laplacian kernel $\kk(u,u')=e^{-|u-u'|/\sigma}$ over $\R$.
\end{lemma}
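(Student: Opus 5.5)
We prove \Cref{lmm:rbf} by computing the collision probability directly. Fix $u,u'\in\R$ and let $\Delta=|u-u'|$. First we condition on the bin width $b$. Because the offset $c$ is uniform on $[0,b]$, the grid $\{b\ell+c\}_{\ell\in\Z}$ is a uniformly random translate of $b\Z$. The points $u$ and $u'$ land in the same segment exactly when no grid point falls in the half-open interval between them. If $\Delta\ge b$, the interval has length at least $b$ and always contains a grid point, so the points are separated with probability $1$. If $\Delta<b$, the interval contains at most one grid point, and it does so with probability $\Delta/b$, since the grid's position modulo $b$ is uniform. Hence
\[
\Pr[h(u)=h(u')\mid b]=\max\Bigl\{0,\,1-\frac{\Delta}{b}\Bigr\}.
\]
For the ``uniform translate'' step, the cleanest justification is to note that $\lfloor (u-c)/b\rfloor=\lfloor (u'-c)/b\rfloor$ is determined by $(u-c)/b \bmod 1$ and the difference $\Delta/b$. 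Also, $(u-c)/b \bmod 1$ is uniform on $[0,1)$ when $c\sim\mathrm{Unif}[0,b]$.

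Next we integrate against the $\Gamma(2,\sigma)$ density $p(b)=\sigma^{-2}be^{-b/\sigma}$:
\[
\Pr[h(u)=h(u')]=\int_{\Delta}^{\infty}\Bigl(1-\frac{\Delta}{b}\Bigr)\sigma^{-2}b\,e^{-b/\sigma}\,db=\sigma^{-2}\int_{\Delta}^{\infty}(b-\Delta)e^{-b/\sigma}\,db .
\]
The weighting $b$ in the Gamma density cancels the $1/b$ in the conditional probability, which leaves an elementary integral. Substituting $t=b-\Delta$ turns it into $\sigma^{-2}e^{-\Delta/\sigma}\int_0^\infty t e^{-t/\sigma}\,dt=\sigma^{-2}e^{-\Delta/\sigma}\sigma^2=e^{-\Delta/\sigma}$. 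This is exactly $\kk(u,u')$, which verifies \cref{eq:lsh}.

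The only step requiring care is the conditional collision probability. The rest is a one-line integral. Boundary conventions (half-open segments, the event of a grid point landing exactly on $u$ or $u'$) do not matter, since they have probability zero. The case $\Delta=0$ gives probability $1$ trivially, consistent with $e^{0}=1$.
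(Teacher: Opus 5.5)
Your proof is correct. You condition on the width to get the collision probability $\max\{0,1-\Delta/b\}$, then integrate against the $\Gamma(2,\sigma)$ density, where the factor $b$ cancels the $1/b$, and this yields exactly $e^{-\Delta/\sigma}$. The paper gives no proof of its own and cites \cite{rahimi2007random}. Their argument is the same one: they average this collision probability over the width density $p(\delta)=\delta\,\ddot k(\delta)$ for a general shift-invariant kernel and evaluate it by integration by parts, so your direct computation is that argument specialized to the Laplacian kernel.
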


Therefore, RBFs form a positional LSH scheme for ALiBi. 
Although RBFs for the Laplacian kernel are classical, our use of them will not be a black-box application: the nontrivial step will be to analyze their interaction particularly with the space of token indices and achieve matrix approximation for ALiBi-biased attention. 
To this end, we will require specific properties of their underlying Gamma distribution, summarized in the following fact.
\begin{fact}\label{fct:gamma}
A Gamma random variable $Z\sim\Gamma(2,\sigma)$ has moments $\E[Z^k]=\sigma^k(k+1)!$ for all $k\in\N$ and moment generating function $\E[e^{tZ}]=(1-\sigma t)^{-2}$ for all $t<1/\sigma$. 
\end{fact}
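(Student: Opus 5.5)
Both claims follow by integrating directly against the density $p(b)=\sigma^{-2}be^{-b/\sigma}$ on $[0,\infty)$ from \Cref{def:rbf}. In each case the change of variables $x=b/\sigma$ (or its analogue with a shifted rate) reduces the integral to the Euler Gamma integral $\int_0^\infty x^{m}e^{-x}\,dx=m!$ for $m\in\N$. Beforehand I would confirm that $p$ is a probability density: the case $k=0$ of the moment computation below gives $\sigma^{-2}\cdot\sigma^2\cdot 1!=1$.

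For the moments, fix $k\in\N$ and write
\[
\E[Z^k]=\sigma^{-2}\int_0^\infty b^{k+1}e^{-b/\sigma}\,db .
\]
Substituting $b=\sigma x$ and $db=\sigma\,dx$ turns this into
\[
\sigma^{-2}\sigma^{k+2}\int_0^\infty x^{k+1}e^{-x}\,dx=\sigma^k\,\Gamma(k+2)=\sigma^k(k+1)!,
\]
as claimed. The integrand is nonnegative and the integral is finite, so no convergence issue arises here.

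For the moment generating function, fix $t<1/\sigma$ and set $\lambda=1/\sigma-t>0$. Then
\[
\E[e^{tZ}]=\sigma^{-2}\int_0^\infty b\,e^{-\lambda b}\,db .
\]
Substituting $b=x/\lambda$ gives $\int_0^\infty b\,e^{-\lambda b}\,db=\lambda^{-2}\int_0^\infty xe^{-x}\,dx=\lambda^{-2}$. Hence
\[
\E[e^{tZ}]=\sigma^{-2}\lambda^{-2}=\bigl(\sigma(1/\sigma-t)\bigr)^{-2}=(1-\sigma t)^{-2}.
\]

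The only point needing care, which is hardly an obstacle, is the range of $t$. Convergence of the last integral requires $\lambda>0$, which is exactly the hypothesis $t<1/\sigma$. For $t\ge 1/\sigma$ the integrand $be^{-\lambda b}$ is non-integrable at infinity, and the MGF is infinite there. Everything else is routine Gamma-function calculus.
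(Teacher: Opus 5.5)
Your proof is correct. The substitution $b=\sigma x$ gives $\E[Z^k]=\sigma^k(k+1)!$, and with $\lambda=1/\sigma-t>0$ you get $\E[e^{tZ}]=(\sigma\lambda)^{-2}=(1-\sigma t)^{-2}$. The paper states this as a standard fact about the Gamma distribution without giving a proof, and your direct integration against the density $\sigma^{-2}be^{-b/\sigma}$ is exactly the routine verification it relies on.
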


We observe some additional useful properties of RBFs as a positional LSH scheme over sequence indices $\{1,\ldots,n\}$. 
First, the hash bins it forms are contiguous -- the indices in each bin are consecutive in the context. Therefore, the diagonal all-1 blocks are contiguous in $M^{(h)}$ as defined in the previous section. Second, the hash bins have nearly uniform sizes: each bin contains exactly $\lceil b \rceil$ consecutive indices, except for the first and last bins, which may contain fewer indices due to being on the boundary of the context.

\subsection{Approximation in Matrix Norms}\label{sec:matrixnorms}
We turn to proving \cref{eq:spectral_expectation,eq:spectral_concentration,eq:maxnorm_concentration} in \Cref{thm:main}. As mentioned earlier, \cref{eq:maxnorm_concentration} follows from standard scalar concentration for Bernoullis.

\begin{lemma}\label{lmm:maxnorm_app}
    Under the assumptions of \Cref{thm:main}, the max-norm bound in \cref{eq:maxnorm_concentration} holds.
\end{lemma}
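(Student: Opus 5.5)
The plan is to prove the entry-wise bound first and then take a union bound over all $n^2$ entries. Take $\mathcal M$ to be the distribution of $M^{(h)}$ with $h\sim\mathcal H$ drawn from the RBF scheme of \Cref{def:rbf}, applied to the positional embeddings $u_i=i$. By \Cref{lmm:rbf} and the discussion in \Cref{sec:plsh_generic}, every entry of $M^{(h)}$ satisfies
\[
M^{(h)}_{ij}=\mathbf 1\{h(i)=h(j)\}\in\{0,1\} \quad\text{and}\quad \E[M^{(h)}_{ij}]=e^{-|i-j|/\sigma}=\alibi_{ij}.
\]
As noted after \Cref{lmm:rbf}, the bins are contiguous, so each sample is a contiguous block-diagonal binary matrix whose diagonal blocks are all-1. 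This also establishes the first sentence of \Cref{thm:main}.

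Fix a pair $(i,j)$. The entries $(M_1)_{ij},\ldots,(M_s)_{ij}$ are i.i.d.\ $\{0,1\}$-valued random variables with mean $\alibi_{ij}$, because the hash functions $h_1,\ldots,h_s$ are drawn independently. Since $\widetilde M_{ij}$ is their empirical mean, the two-sided Hoeffding inequality for variables in $[0,1]$ gives
\[
\Pr\bigl[|\widetilde M_{ij}-\alibi_{ij}|\geq\varepsilon\bigr]\leq 2e^{-2s\varepsilon^2}.
\]

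Finally, the event $\norm{\alibi-\widetilde M}_{\max}\geq\varepsilon$ is the union over the $n^2$ pairs $(i,j)$ of the events $|\widetilde M_{ij}-\alibi_{ij}|\geq\varepsilon$. A union bound therefore yields $2n^2e^{-2s\varepsilon^2}$, which is \cref{eq:maxnorm_concentration}. One could sharpen this slightly using symmetry and the deterministic diagonal, where $M_{ii}=1=\alibi_{ii}$, but that is not needed.

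None of these steps is a real obstacle. The only point that needs care is to state explicitly that the entries are independent across samples, though not within a single sample, since Hoeffding is applied separately to each fixed entry. Correlations between different entries do not matter here because the union bound does not require independence.
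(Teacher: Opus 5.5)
Your proposal is correct and matches the paper's proof: each $\widetilde M_{ij}$ is an average of $s$ i.i.d.\ Bernoulli variables with mean $\alibi_{ij}$ (by \Cref{lmm:rbf}), Hoeffding bounds each entry's deviation by $2e^{-2s\varepsilon^2}$, and a union bound over the $n^2$ entries finishes. Your per-entry event $|\widetilde M_{ij}-\alibi_{ij}|\geq\varepsilon$ fits the non-strict inequality in \cref{eq:maxnorm_concentration} a bit more exactly than the paper, which uses a strict inequality in the entrywise step; the argument is otherwise unchanged.
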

\begin{proof}
    Let $i,j\in[n]$. By \Cref{lmm:rbf} and the positional LSH construction in \Cref{sec:plsh_generic}, we have $L_{ij}^\star=\Pr_{M\sim\mathcal M}[M_{ij}=1]$. Therefore, $\widetilde M_{ij}$ is the average of $s$ Bernoulli random variables with expectation $L_{ij}^\star$. Therefore, by the Chernoff-Hoeffding inequality, 
    \[
      \Pr\left[\left|L_{ij}^\star - \widetilde M_{ij}\right| > \varepsilon\right] < 2e^{-2s\varepsilon^2} .
    \]
    The lemma follows by a union bound over all $i,j$. 
\end{proof}

For the spectral norm guarantees, \cref{eq:spectral_expectation,eq:spectral_concentration}, we will use the following subexponential matrix Bernstein inequality with unbounded moments. 
\begin{theorem}[\cite{tropp2012user}, Theorem 6.2 and Remark 6.5]\label{thm:matrix_bernstein}
    Let $X_1,\ldots,X_s$ be i.i.d.~samples from a distribution $X$ over symmetric $n\times n$ matrices with $\E[X]=L$. Suppose there are $r,\alpha>0$ such that 
    \begin{equation}\label{eq:bernstein_moment_condition}
      \forall\; k\in\N, \quad \norm{\E[(X-L)^k]} \leq \frac{k!}{2} \cdot r^{k-2} \cdot \alpha^2 .
    \end{equation}
    Then, the empirical mean $\widetilde X=\frac1s\sum_{i=1}^sX_i$ satisfies
    $\E[\norm{L - \widetilde X}] \leq \frac{O(1)}{s}\cdot \max\{\alpha\sqrt{s\log n}, r\log n\}$, 
    and $\Pr[\norm{L - \widetilde X} \geq \varepsilon] \leq n\cdot\exp\left(-\frac{s\cdot\varepsilon^2}{2\alpha^2+2r\varepsilon}\right)$ for all $\varepsilon>0$. 
\end{theorem}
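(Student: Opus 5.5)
This is an imported result, so rather than reprove Lieb-type machinery from scratch I would follow the standard matrix Laplace transform route of \cite{tropp2012user}. Write $Y_i=(X_i-L)/s$, so that $\widetilde X-L=\sum_{i=1}^s Y_i$ is a sum of i.i.d.\ centered symmetric matrices. The spectral norm of a symmetric matrix is $\max\{\lambda_{\max}(\cdot),\lambda_{\max}(-\cdot)\}$. The hypothesis (\ref{eq:bernstein_moment_condition}) is stated through $\norm{\cdot}$, so it is invariant under $X-L\mapsto -(X-L)$: odd powers only change sign, and the norm of a matrix equals that of its negative. It therefore suffices to bound the upper tail and the expectation of $\lambda_{\max}(\sum_i Y_i)$, and then apply the same argument to $-Y_i$.

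\textbf{Step 1 (matrix mgf bound).} Since $E[(X-L)^k]$ is symmetric, the norm bound gives the semidefinite bound $E[(X-L)^k]\preceq \frac{k!}{2}r^{k-2}\alpha^2 I$. Expanding the exponential and using $E[Y]=0$, for $0<\theta<s/r$ we get
\[
E[e^{\theta Y}]=I+\sum_{k\ge2}\frac{\theta^k E[Y^k]}{k!}\preceq I+\frac{\theta^2\alpha^2}{2s^2}\sum_{k\ge2}\Bigl(\frac{\theta r}{s}\Bigr)^{k-2} I = I+g(\theta)I\preceq e^{g(\theta)}I,
\]
where $g(\theta)=\frac{\theta^2\alpha^2}{2s^2(1-\theta r/s)}$. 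Exchanging sum and expectation is justified because the moment condition makes the series absolutely convergent in this range of $\theta$. The operator monotonicity of $\log$ then gives $\log E[e^{\theta Y}]\preceq g(\theta)I$.

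\textbf{Step 2 (subadditivity of matrix cgfs).} This is the main obstacle, because exponentials of noncommuting matrices do not factor. I would invoke Lieb's concavity theorem in the form of Tropp's master bound:
\[
E\,\mathrm{tr}\exp\Bigl(\sum_i\theta Y_i\Bigr)\le \mathrm{tr}\exp\Bigl(\sum_i\log E[e^{\theta Y_i}]\Bigr)\le n\,e^{s g(\theta)} .
\]

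\textbf{Step 3 (tail and expectation).} Markov's inequality applied to $\mathrm{tr}\,e^{\theta\sum Y_i}\ge e^{\theta\lambda_{\max}}$ gives
\[
\Pr\Bigl[\lambda_{\max}\Bigl(\sum_i Y_i\Bigr)\ge\varepsilon\Bigr]\le n\exp\bigl(-\theta\varepsilon+s g(\theta)\bigr).
\]
Choosing $\theta=\varepsilon s/(\alpha^2+r\varepsilon)$, which is admissible since it is $<s/r$, gives $1-\theta r/s=\alpha^2/(\alpha^2+r\varepsilon)$ and hence $s g(\theta)=\frac{s\varepsilon^2}{2(\alpha^2+r\varepsilon)}$. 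So the exponent is exactly $-\frac{s\varepsilon^2}{2(\alpha^2+r\varepsilon)}$, which is at most the stated $-\frac{s\varepsilon^2}{2\alpha^2+2r\varepsilon}$. Combining with the same bound for $-Y_i$ gives the norm tail up to a factor $2$. That factor is either absorbed as in Tropp's statement or can be ignored for the downstream use in \Cref{thm:main}, where constants are universal anyway.

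For the expectation, Jensen's inequality gives
\[
E\,\lambda_{\max}\le\frac1\theta\log E\,\mathrm{tr}\,e^{\theta\sum Y_i}\le\frac{\log n}{\theta}+\frac{\theta\alpha^2}{2s(1-\theta r/s)} .
\]
Taking $\theta=\min\{\sqrt{s^2\log n/\alpha^2 s},\,s/(2r)\}$ yields $O\bigl(\alpha\sqrt{\log n/s}+r\log n/s\bigr)$. Adding the bound for $-Y_i$ only doubles this, so $E\norm{L-\widetilde X}\le \frac{O(1)}{s}\max\{\alpha\sqrt{s\log n},\,r\log n\}$.
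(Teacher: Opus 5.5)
Your argument is correct, and it is the standard matrix Laplace-transform proof behind Tropp's Theorem~6.2: a semidefinite mgf bound from the moment condition, Lieb's theorem to combine the cumulants, then Markov for the tail and Jensen for the mean. The paper gives no proof of its own and simply cites that result, so you are on the same route.

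The one point to correct is your remark that the factor $2$ is ``absorbed as in Tropp's statement.'' Tropp's Theorem~6.2 bounds only $\lambda_{\max}$, with prefactor equal to the dimension. For the two-sided quantity $\norm{L-\widetilde X}$, the prefactor $2n$ that your argument produces is genuinely needed, and the displayed bound with prefactor $n$ is false as written. For instance, take $n=s=1$, $L=0$, $X=\pm1$ with probability $1/2$ each, $\alpha=1$, $r=1/2$. Odd moments vanish and $1\le k!/2^{k-1}$ for even $k$, so the moment condition holds. Yet $\Pr[|L-\widetilde X|\ge 1]=1>e^{-1/3}$, which is the claimed bound at $\varepsilon=1$. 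So your $2n$ is the right conclusion. As you say, it is harmless for \Cref{thm:main}: for $n\ge2$ one has $2ne^{-x}\le ne^{-x/2}$ whenever $ne^{-x/2}<1$, so the $2$ is absorbed by halving $C$ in \cref{eq:spectral_concentration}.

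Two smaller repairs. First, writing $Z=\sum_iY_i$, your step ``adding the bound for $-Y_i$'' in the expectation estimate is not valid as stated. The inequality $\norm{Z}\le\lambda_{\max}(Z)+\lambda_{\max}(-Z)$ fails whenever $\lambda_{\max}(Z)<0$. Use instead $e^{\theta\norm{Z}}\le\mathrm{tr}\,e^{\theta Z}+\mathrm{tr}\,e^{-\theta Z}$, which gives your estimate with $\log(2n)$ in place of $\log n$. This is equivalent for $n\ge2$; for $n=1$ the stated expectation bound, whose right side is $0$, cannot hold. Second, exchanging $\E$ with the exponential series needs $\E e^{\theta\norm{Y}}<\infty$. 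That does not follow from norm-summability of $\sum_k\theta^k\E[Y^k]/k!$ alone. It does follow from the even moments, since $\E\norm{Y}^{2m}\le\mathrm{tr}\,\E[Y^{2m}]\le n\norm{\E[Y^{2m}]}$.
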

We thus need to establish \cref{eq:bernstein_moment_condition}
for $M\sim\mathcal M$ with appropriate $r,\alpha$. Fixing $k\in\N$, we have by the sub-additivity and sub-multiplicativity of the spectral norm, and a binomial expansion,
\begin{equation}\label{eq:error_moments}
      \norm{\E[(M-\alibi)^k]} \leq \E\left[\left(\norm{M}+\norm{\alibi}\right)^k\right] = \sum_{j=0}^k{k\choose j}\E[\left(\norm{M}-1\right)^j]\left(\norm{\alibi}+1\right)^{k-j}.
\end{equation}
Thus we need to bound the moments $\E[\left(\norm{M}-1\right)^j]$ and the powers $\left(\norm{\alibi}+1\right)^{j}$. 

\begin{lemma}\label{lmm:lsh_matrix_moments}
    $\E[\left(\norm{M}-1\right)^k]\leq \sigma^k(k+1)!$ for all $k\in\N$.
\end{lemma}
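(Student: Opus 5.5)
The plan is to bound $\norm{M}$ deterministically in terms of the bin width $b$ that generated $M$, and then take moments using the Gamma facts. Fix a hash $h=h_{b,c}$ from \Cref{def:rbf}, and let $M=M^{(h)}$. As observed in \Cref{sec:rbf}, $M$ is block-diagonal with contiguous all-1 diagonal blocks. The spectral norm of a block-diagonal matrix is the maximum of the spectral norms of its blocks. An all-1 block of size $m\times m$ equals $\mathbf 1\mathbf 1^T$, whose spectral norm is $m$. Hence $\norm{M}=$ the size of the largest block, i.e.\ the largest number of integers from $\{1,\ldots,n\}$ falling into one segment $[b\ell+c,b\ell+b+c)$.

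Next I bound that count. A half-open interval of length $b$ contains at most $\lceil b\rceil\le b+1$ integers: if it contains $m$ integers, the first and last differ by $m-1<b$, so $m<b+1$. Boundary bins only contain fewer indices. This gives the deterministic bound $\norm{M}\le b+1$, so $0\le \norm{M}-1\le b$. Note that $\norm{M}\ge1$ always, because each index lies in some bin and so every block has size at least $1$. This nonnegativity matters, since it lets us raise the inequality to the $k$-th power without sign issues.

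Finally, $\E[(\norm{M}-1)^k]\le \E[b^k]$, and since $b\sim\Gamma(2,\sigma)$, \Cref{fct:gamma} gives $\E[b^k]=\sigma^k(k+1)!$. This is exactly the claim.

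The only delicate point is the integer-counting step, specifically that a half-open length-$b$ interval contains at most $b+1$ integers, together with keeping $\norm{M}-1\ge0$. Everything else is immediate from the block structure and \Cref{fct:gamma}. I expect no real obstacle; the randomness in $c$ plays no role, since the bound holds for every $c$.
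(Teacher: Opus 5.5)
Your proposal is correct and follows essentially the same route as the paper: the spectral norm of the block-diagonal all-1 matrix equals the largest block size, which is at most $\lceil b\rceil\le b+1$, so $(\norm{M}-1)^k\le b^k$, and the Gamma moment formula of \Cref{fct:gamma} finishes the proof. Your explicit checks that a half-open length-$b$ interval contains fewer than $b+1$ integers and that $\norm{M}-1\ge 0$ (needed to raise the inequality to the $k$-th power) are details the paper's proof leaves implicit.
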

\begin{proof}
    $M$ is a block diagonal matrix with all-1 blocks. Let $b_M$ denote the size of its largest block. 
    It is straightforward to see that the largest eigenvalue of $M$ is $b_M$, thus $\norm{M}=b_M$. 
    Next, recall that $b_M\leq \lceil b \rceil\leq b+1$ where $b\sim\Gamma(2,\sigma)$. Hence, $\left(\norm{M}-1\right)^k\leq b^k$. Taking expectation of both sides, the lemma follows from the formula for Gamma moments in \Cref{fct:gamma}. 
\end{proof}

\begin{lemma}\label{lmm:alibi_spectral_norm}
    $\norm{\alibi}\leq (e^{1/\sigma}+1)/(e^{1/\sigma}-1)$. 
\end{lemma}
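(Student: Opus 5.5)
The plan is to use the fact that $\alibi$ is a symmetric Toeplitz matrix with entries $\alibi_{ij}=\rho^{|i-j|}$, where $\rho=e^{-1/\sigma}\in(0,1)$, and to bound its spectral norm by the sup-norm of its Fourier symbol. Two routes give the same constant, and I would present the Fourier one since the paper advertises it.

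\emph{Fourier route.} Consider the symbol
\[
f(\theta)=\sum_{m\in\Z}\rho^{|m|}e^{\mathrm{i} m\theta}=\frac{1-\rho^2}{1-2\rho\cos\theta+\rho^2},
\]
which is the Poisson kernel. It is positive, and its maximum over $\theta$ is attained at $\theta=0$, where
\[
f(0)=\frac{1+\rho}{1-\rho}=\frac{e^{1/\sigma}+1}{e^{1/\sigma}-1}.
\]
For any $x\in\R^n$ we have the representation
\[
x^T\alibi x=\frac1{2\pi}\int_{-\pi}^{\pi}f(\theta)\Bigl|\sum_j x_je^{\mathrm{i}j\theta}\Bigr|^2\,d\theta .
\]
By Parseval this is at most $\max f\cdot\norm{x}^2$. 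Since $f>0$, the same identity also gives $\alibi\succeq 0$. Because $\alibi$ is symmetric, its spectral norm equals its largest eigenvalue, so the bound follows.

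\emph{Elementary route, as a check.} For a symmetric matrix, $\norm{\alibi}\le\norm{\alibi}_\infty$, the maximum absolute row sum. Every row sum is at most
\[
\sum_{m\in\Z}\rho^{|m|}=1+\frac{2\rho}{1-\rho}=\frac{1+\rho}{1-\rho},
\]
which is the same constant. This bound is immediate, and I may fall back on it.

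The only real step is verifying the closed form of the geometric series, and that is routine. There is no substantial obstacle.
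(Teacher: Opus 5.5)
Your proposal is correct and matches the paper, which gives exactly these two proofs. The first is the Fourier/Toeplitz bound via the symbol $(1-\rho^2)/(1-2\rho\cos\theta+\rho^2)$, whose supremum $(1+\rho)/(1-\rho)$ is attained at $\theta=0$; the paper cites Gray's Lemma 4.1 for this step, whereas you derive it inline from the quadratic-form identity and Parseval, which also yields $\alibi\succeq 0$. The second is the elementary bound $\norm{\alibi}\le\norm{\alibi}_\infty$ for symmetric matrices combined with the geometric-series row-sum estimate.
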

\begin{proof}
    We show two ways to prove the lemma. The first is a more generic approach based on Fourier analysis of Toeplitz matrices, which has the potential advantage of possibly generalizing better to other positional bias matrices beyond ALiBi. 
Recall that a matrix $T$ is Toeplitz if there is a function $g:\Z\rightarrow\R$ such that $T_{ij}=g(i-j)$ for all $i,j$. The ALiBi matrix $\alibi$ is Toeplitz with $g(z)=e^{-|z|/\sigma}$. 
The following lemma offers a way to bound the spectral norm of a Toeplitz matrix.  
    \begin{lemma}[\cite{gray2006toeplitz}, Lemma 4.1]\label{lmm:toeplitz_spectral}
        Let $T$ be a Toeplitz matrix with $T_{ij}=g(i-j)$. Suppose $g$ satisfies $\sum_{i=-\infty}^{\infty}|g(i)|<\infty$. Then $\norm{T} \leq \sup f$ where $f$ is the discrete-time Fourier transform (DTFT) of $g$. 
    \end{lemma}
    Thus, to bound the spectral norm of $\alibi$ with \Cref{lmm:toeplitz_spectral} we need the DTFT of $g(z)=e^{-|z|/\sigma}$. 
    A standard fact, which can be verified by direct derivation, is that the DTFT of $g_\phi(z)=\phi^{|z|}$ for $\phi\in(-1,1)$ is $f(\omega)=(1-\phi^2)/(1-2\phi\cos(\omega)+\phi^2)$ and its supremum is $(1+\phi)/(1-\phi)$ (attained at $\cos(\omega)=1$). For $\alibi$ we have $\phi=e^{-1/\sigma}$, and thus \Cref{lmm:alibi_spectral_norm} follows from \Cref{lmm:toeplitz_spectral}.
 
The second proof for \Cref{lmm:alibi_spectral_norm} is more direct, albeit more specialized to ALiBi. Recall the matrix norm definitions $\norm{M}_1=\max_j\sum_i|M_{ij}|$ and $\norm{M}_\infty=\max_i\sum_j|M_{ij}|$. By a standard H\"older inequality for matrix norms, $\norm{\alibi}\leq\sqrt{\norm{\alibi}_1\norm{\alibi}_\infty}$. Since $\alibi$ is symmetric, $\norm{\alibi}_1=\norm{\alibi}_\infty$, hence $\norm{\alibi}\leq\norm{\alibi}_\infty$. Thus, it suffices to bound the row sums in $\alibi$. The sum of row $i$ is
\[
  \sum_{j=1}^n\alibi_{ij}
  = 1 + \sum_{t=1}^{i-1}e^{-t/\sigma} +  \sum_{t=1}^{n-i}e^{-t/\sigma} \leq 1  + 2\sum_{t=1}^{\infty}e^{-t/\sigma} = \frac{e^{1/\sigma}+1}{e^{1/\sigma}-1},
\]
where the final equality is established by calculating the sum of the geometric series.
\end{proof}

We plug \Cref{lmm:lsh_matrix_moments,lmm:alibi_spectral_norm} into \cref{eq:error_moments} and get
\begin{align*}
\norm{\E[(M-\alibi)^k]} 
&\leq \sum_{j=0}^k{k\choose j}\cdot 
\sigma^j(j+1)! \cdot \left(\frac{e^{1/\sigma}+1}{e^{1/\sigma}-1} + 1 \right)^{k-j} \\
&\leq 2^kk!\sum_{j=0}^k{k\choose j}\cdot 
\sigma^j \cdot \left(\frac{e^{1/\sigma}+1}{e^{1/\sigma}-1}+1\right)^{k-j} \\
&= 2^kk!\cdot\left(1+\sigma + \frac{e^{1/\sigma}+1}{e^{1/\sigma}-1}\right)^k \\
& = (2\Psi_\sigma)^kk! \\
&= \frac{k!}{2}\cdot(2\Psi_\sigma)^{k-2}\cdot(2\sqrt2\Psi_\sigma)^2,
\end{align*}
where we have used that $(j+1)!\leq(k+1)!\leq 2^kk!$ for all $k\geq j\geq0$. Thus, \cref{eq:bernstein_moment_condition} holds with $r=2\Psi_\sigma$ and $\alpha=2\sqrt2\Psi_\sigma$, and thus \cref{eq:spectral_expectation,eq:spectral_concentration} in \Cref{thm:main} now follow from \Cref{thm:matrix_bernstein}. 

\subsection{Maximal Block Size}\label{sec:blocksize}
We proceed to proving \cref{eq:thm_blocksize} in \Cref{thm:main}.
By the previous sections, this amounts to bounding the maximal bin size in RBFs. 
To this end, we recall subgamma concentration (see, e.g., \cite{waggoner2019subgamma}). \begin{definition}
A real-valued random variable $Z$ is called \emph{$(\nu,\kappa)$-right-subgamma} if for all $t\in(0,1/\kappa)$ it holds that 
$\E\left[e^{t(Z-\E[Z])}\right] \leq \exp\left(\nu t^2/(2-2\kappa t)\right)$.
\end{definition}
\begin{fact}\label{fct:rightsubgamma}
    if $Z$ is $(\nu,\kappa)$-right-subgamma, then for all $t>0$,
    $\Pr[Z\geq\E[Z] + \sqrt{2\nu t} + \kappa t]\leq e^{-t}$.
\end{fact}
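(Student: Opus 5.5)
The plan is a direct Chernoff (exponential Markov) argument combined with an explicit choice of the free parameter. Write $u=\sqrt{2\nu t}+\kappa t$. For any $\lambda\in(0,1/\kappa)$, Markov's inequality applied to $e^{\lambda(Z-\E[Z])}$, together with the right-subgamma definition, gives
\[
\Pr[Z-\E[Z]\geq u]\leq e^{-\lambda u}\,\E\left[e^{\lambda(Z-\E[Z])}\right]\leq \exp\left(-\lambda u+\frac{\nu\lambda^2}{2(1-\kappa\lambda)}\right).
\]
It therefore suffices to exhibit one admissible $\lambda$ for which the exponent equals exactly $-t$.

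Rather than minimizing the exponent over $\lambda$ by calculus, I will use the substitution $\lambda=x/(1+\kappa x)$ with $x>0$. This substitution always satisfies $0<\lambda<1/\kappa$. It also makes the denominator simple, since $1-\kappa\lambda=1/(1+\kappa x)$. The penalty term then becomes $\frac{\nu x^2}{2(1+\kappa x)}$, and the linear term becomes $\frac{x(\sqrt{2\nu t}+\kappa t)}{1+\kappa x}$.

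Now take $x=\sqrt{2t/\nu}$. With this choice, $\nu x^2/2=t$ and $x\sqrt{2\nu t}=2t$. The exponent becomes
\[
\frac{t-2t-\kappa x t}{1+\kappa x}=-t,
\]
which yields $\Pr[Z\geq \E[Z]+\sqrt{2\nu t}+\kappa t]\leq e^{-t}$, as claimed.

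The only real obstacle is finding this change of variables; once it is in hand, the verification is a two-line computation. Two degenerate cases need a brief remark.
\begin{itemize}
\item If $\kappa=0$, the constraint on $\lambda$ is vacuous, and the same choice $\lambda=x$ works.
\item If $\nu=0$, the choice $x=\sqrt{2t/\nu}$ is undefined. Here I would let $x\to\infty$, so that $\lambda\to 1/\kappa$. The exponent is then $-\lambda\kappa t\to -t$, and the bound follows by taking the limit, or equivalently by applying the result with $\nu'\downarrow 0$, since a $(0,\kappa)$-right-subgamma variable is also $(\nu',\kappa)$-right-subgamma and the threshold is continuous in $\nu'$.
\end{itemize}
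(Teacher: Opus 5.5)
Your argument is correct. Note that the paper gives no proof of this fact: it quotes it as a black box from \cite{waggoner2019subgamma}. It is the standard sub-gamma tail bound, as in Boucheron--Lugosi--Massart.

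The textbook derivation uses the same Chernoff bound but computes the full Cram\'er transform,
$\sup_{0<\lambda<1/\kappa}\bigl(\lambda u-\frac{\nu\lambda^2}{2(1-\kappa\lambda)}\bigr)=\frac{\nu}{\kappa^2}h(\kappa u/\nu)$ with $h(v)=1+v-\sqrt{1+2v}$. It then inverts $h$ via $h^{-1}(t)=t+\sqrt{2t}$.

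Your substitution lands exactly on the optimizer $\lambda^\ast=\frac1\kappa\bigl(1-(1+2\kappa u/\nu)^{-1/2}\bigr)$. This is because at $u=\sqrt{2\nu t}+\kappa t$ one has $1+2\kappa u/\nu=(1+\kappa\sqrt{2t/\nu})^2$. So you run the same argument but check the exponent at a single point instead of computing and inverting the transform.

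The trade-off is as follows. Your version is a short, self-contained verification tailored to this particular form of the threshold. The general computation gives the tight Chernoff exponent at every deviation level, and yields for example the Bernstein form $\exp\bigl(-u^2/(2(\nu+\kappa u))\bigr)$ as a corollary.

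One correction concerns your degenerate-case remark. The ``$\nu'\downarrow 0$'' route is not equivalent as stated. The events $\{Z\geq \E[Z]+\sqrt{2\nu' t}+\kappa t\}$ increase to $\{Z>\E[Z]+\kappa t\}$, so continuity of the threshold only bounds the strict-inequality event. For $\kappa>0$ you can repair this by also letting $t'\uparrow t$, but your direct $\lambda\uparrow 1/\kappa$ argument is cleaner; both need $\kappa>0$.

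When $\nu=\kappa=0$ the statement is actually false. The moment generating function condition then reads $\E[e^{\lambda(Z-\E[Z])}]\leq 1$ for all $\lambda>0$. By strict Jensen this forces $Z=\E[Z]$ almost surely, so $\Pr[Z\geq\E[Z]]=1>e^{-t}$.

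This has no bearing on the paper, which applies the fact with $\nu=2\sigma^2$ and $\kappa=\sigma$. Still, the fact should be read as assuming $\nu>0$ or $\kappa>0$.
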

\begin{lemma}\label{clm:gammarightsubgamma}
    $Z\sim\Gamma(2,\sigma)$ is $(2\sigma^2,\sigma)$-right-subgamma.
\end{lemma}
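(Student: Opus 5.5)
We need to show that for all $t\in(0,1/\sigma)$, $\E[e^{t(Z-2\sigma)}]\le \exp\left(2\sigma^2t^2/(2-2\sigma t)\right)=\exp\left(\sigma^2t^2/(1-\sigma t)\right)$. By \Cref{fct:gamma}, $\E[Z]=2\sigma$ and the moment generating function is explicit. The plan is therefore to compute the left-hand side in closed form, take logarithms, and compare the two sides as functions of the single variable $x=\sigma t\in(0,1)$.

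First, \Cref{fct:gamma} gives $\E[e^{t(Z-\E Z)}]=e^{-2x}(1-x)^{-2}$. Taking logarithms, the claim is equivalent to
\[
  -2x-2\ln(1-x)\le \frac{x^2}{1-x}\quad\text{for all }x\in(0,1).
\]

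Second, I will expand both sides in power series. On the left, $-\ln(1-x)=\sum_{k\ge1}x^k/k$ gives
\[
  -2x-2\ln(1-x)=\sum_{k\ge2}\frac{2}{k}x^k .
\]
On the right, $x^2/(1-x)=\sum_{k\ge2}x^k$. Since $2/k\le 1$ for every $k\ge2$, the inequality holds term by term, with equality at $k=2$. This completes the argument.

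As a backup, one could also argue by calculus. Set $\phi(x)=x^2/(1-x)+2x+2\ln(1-x)$, so that $\phi(0)=0$. Differentiating gives $\phi'(x)=\frac{2x-x^2}{(1-x)^2}+2-\frac{2}{1-x}$. Putting this over the common denominator $(1-x)^2$, the numerator simplifies to $x^2\ge0$. Hence $\phi$ is nondecreasing and $\phi\ge0$ on $(0,1)$.

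There is no real obstacle here. The only care needed is matching the subgamma normalization $\nu t^2/(2-2\kappa t)$ with $\nu=2\sigma^2$ and $\kappa=\sigma$, and restricting to $t<1/\sigma$ so that the moment generating function in \Cref{fct:gamma} is finite.
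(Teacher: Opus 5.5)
Your proof is correct and takes the same route as the paper. Both arguments use \Cref{fct:gamma} to write $\ln\E[e^{t(Z-\E Z)}]=2(-\ln(1-x)-x)$ with $x=\sigma t$, then bound $\sum_{k\ge2}x^k/k$ term by term by $\frac{x^2}{2}\sum_{k\ge0}x^k=\frac{x^2}{2(1-x)}$. Your calculus backup, in which the derivative numerator simplifies to $x^2$, is also correct but not needed.
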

\begin{proof}
    Let $t\in(0,1/\sigma)$. By plugging the expectation and MGF from \Cref{fct:gamma},
    \[
        \ln\E[e^{t(Z-\E[Z])}] = \ln\left(\E[e^{tZ}]\cdot e^{-t\E[Z]}\right) 
        = \ln((1-\sigma t)^{-2} \cdot e^{-2\sigma t}) 
        = 2(-\ln(1-\sigma t)-\sigma t).
    \]
    From the Taylor expansion of $-\ln(1-x)$ at $0$ we have, for every $x\in(0,1)$,
    \[
      -\ln(1-x) - x
      = \sum_{i=1}^\infty\frac{x^i}{i} - x
      = \sum_{i=2}^\infty\frac{x^i}{i}
      \leq \frac{x^2}{2}\sum_{i=0}^\infty x^i 
      = \frac{x^2}{2} \cdot \frac{1}{1-x}.
    \]
    Plugging this above with $x=\sigma t$,
    \[ \ln\E[e^{t(Z-\E[Z])}] \leq 2\cdot\frac{\sigma^2t^2}{2}\cdot\frac{1}{1-\sigma t} 
    = \frac{2\sigma^2t^2}{2(1-\sigma t)}.  \]
    Exponentiating both sides proves the claim.
\end{proof}

In \Cref{thm:main} we draw i.i.d. samples $M_1,\ldots,M_s\sim\mathcal M$ and denote by $b_{\max}$ their maximal block size. Let $b_1,\ldots,b_s\sim\Gamma(2,\sigma)$ be the corresponding samples of their RBFs. By the previous sections, each $M_j$ is block-diagonal with maximal bin size $\lceil b_j\rceil\leq b_j+1$. Therefore, $b_{\max}\leq 1+\max_jb_j$. 
We use \Cref{fct:rightsubgamma} with $t=\ln(s/\delta)$, $\E[b]=2\sigma$ from \Cref{fct:gamma}, and $\nu,\kappa$ from \Cref{clm:gammarightsubgamma}. We get
\[ 
  \forall j, \quad
  \Pr[b_j > 1+ 2\sigma + 2\sigma\sqrt{\ln(s/\delta)} + \sigma \ln(s/\delta)] < \delta/s .
\]
Since $s\geq1$ and $\delta<1/e$, \cref{eq:thm_blocksize} follows by a union bound over $j=1,\ldots,s$.

\section{Near-Linear time Approximation Algorithm}\label{sec:alg}
In this section we prove \Cref{thm:alg}. 
The algorithm in the theorem replaces the exact ALiBi matrix $\alibi$ with the approximation $\widetilde M$. 
Given an attention instance $(Q,K,V)$, in the notation from \Cref{sec:results}, we have $A^\star=A\odot\alibi$ (non-causal case) or $A^\star=A\odot J\odot\alibi$ (causal case), $P^\star=(D^{[A^\star]})^{-1}A^\star$, and the exact output is $T^\star=P^\star V$. The algorithm in \Cref{thm:alg} computes $\widetilde A^\star=A\odot\widetilde M$ (non-causal case) or $\widetilde A^\star=A\odot J\odot\widetilde M$ (causal case), then $\widetilde P^\star=(D^{[\widetilde A^\star]})^{-1}\widetilde A^\star$, and returns $\widetilde T^\star=\widetilde P^\star V$.

\paragraph{Running time analysis.} 
Let $M$ be a sample from $\mathcal M$ and let $I_1,\ldots,I_t\subset\{1,\ldots,n\}$ be its contiguous blocks of indices. 
Let $\widetilde A=A\odot M$. 
Since $M$ and thus $\widetilde A$ is block-diagonal, we can compute $\widetilde AV$ block by block as $\widetilde AV=\sum_{j=1}^t\widetilde A_{[I_j,I_j]}V_{[I_j,:]}$. 
Since the blocks in $M$ are all-1 matrices, $\widetilde A_{[I_j,I_j]}=(A\odot M)_{[I_j,I_j]}=A_{[I_j,I_j]}$, and hence  $\widetilde AV=\sum_{j=1}^t A_{[I_j,I_j]}V_{[I_j,:]}$. Similarly, $D^{[\widetilde A]}$ can be computed as $D^{[\widetilde A]}=\sum_{j=1}^tD^{[A_{[I_j,I_j]}]}$. The upshot is that $\widetilde A$ and $D^{[\widetilde A]}$ can be computed by computing only the restriction of $A$ to the blocks of $M$. Note that $A$ is the regular (positionally unbiased) attention matrix, and therefore, computing $\widetilde A$ and $D^{[\widetilde A]}$ is reduced to a collection of smaller regular attention computations on the subsets of keys and queries that correspond to the blocks $\{I_j\}$. The time complexity is $d\sum_{j=1}^t|I_j|^2$. 
Observe that $\sum_{j}|I_j|=n$ since the blocks form a partition of $\{1,\ldots,n\}$, and therefore $d\sum_{j=1}^t|I_j|^2\leq d\max_j|I_j|\sum_j|I_j|=nd\max_j|I_j|$.  
With the empirical mean $\widetilde M=\tfrac1s\sum_{i=1}^sM_i$ of $s$ samples with block sizes $\{I_{ij}\}$, the overall time complexity for computing $\widetilde T^*$ is thus $\sum_{i=1}^snd\max_j|I_{ij}|\leq ndsb_{\max}$. The probabilistic bound on $b_{\max}$ in \Cref{thm:main} (\cref{eq:thm_blocksize}) now establishes the running time in \Cref{thm:alg} in the non-causal case. 
The same arguments hold in the causal case with $A\odot J$ instead of $A$.  

\paragraph{Approximation guarantee.} 
Let $A'=A$ in the non-causal case and $A'=A\odot J$ in the causal case. In either case we have $P=(D^{[A']})^{-1}A'$, and $P$ is row-stochastic. 

Observe that the exact target output $T^\star$ and the returned approximation $\widetilde T^\star$ are computed as
\[
T_{ij}^\star = \frac{\sum_{k=1}^nA_{ik}'L_{ik}^\star V_{kj}}{\sum_{k=1}^nA_{ik}'L_{ik}^\star} \quad \text{and} \quad \widetilde T_{ij}^\star = \frac{\sum_{k=1}^nA_{ik}'\widetilde M_{ik} V_{kj}}{\sum_{k=1}^nA_{ik}'\widetilde M_{ik}} .
\]
We may divide both the numerators and the denominators by $D^{[A']}_{ii}$ and write, 
\[
T_{ij}^\star = \frac{\sum_{k=1}^nP_{ik}L_{ik}^\star V_{kj}}{\sum_{k=1}^nP_{ik}L_{ik}^\star} \quad \text{and} \quad \widetilde T_{ij}^\star = \frac{\sum_{k=1}^nP_{ik}\widetilde M_{ik} V_{kj}}{\sum_{k=1}^nP_{ik}\widetilde M_{ik}} .
\]
The error in the numerators can be bounded either with the max-norm of $L^\star-\widetilde M$ as
\begin{equation}\label{eq:maxnormbound}
\left|\sum_kP_{ik}(L^\star-\widetilde M)_{ik}V_{kj}\right|
\leq \norm{L^\star-\widetilde M}_{\max}\norm{V}_{\max}\left|\sum_kP_{ik}\right| = \norm{L^\star-\widetilde M}_{\max}\norm{V}_{\max},
\end{equation}
or with the spectral norm of $L^\star-\widetilde M$ as
\begin{equation}\label{eq:spectralnormbound}
\left|\sum_kP_{ik}(L^\star-\widetilde M)_{ik}V_{kj}\right|
= \left|(L^\star-\widetilde M)_{i,*}^\top (P_{i,*}\odot V_{*,j})\right| 
\leq \norm{L^\star-\widetilde M}\cdot \norm{V}_{\max}\norm{P}_{2,\infty} .
\end{equation}
The same holds for the denominators with an all-1 matrix instead of $V$:
\begin{equation}\label{eq:maxnormbound_denom}
\left|\sum_kP_{ik}(L^\star-\widetilde M)_{ik}\right|
\leq \norm{L^\star-\widetilde M}_{\max}\left|\sum_kP_{ik}\right| = \norm{L^\star-\widetilde M}_{\max},
\end{equation}
and
\begin{equation}\label{eq:spectralnormbound_denom}
\left|\sum_kP_{ik}(L^\star-\widetilde M)_{ik}\right|
= \left|(L^\star-\widetilde M)_{i,*}^\top P_{i,*}\right| 
\leq \norm{L^\star-\widetilde M}\cdot \norm{P}_{2,\infty} .
\end{equation}
For convenience let us denote the numerators and the denominators as
\[
N_{ij} = \sum_{k=1}^nP_{ik}L_{ik}^\star V_{kj}, \quad 
\widetilde N_{ij} = \sum_{k=1}^nP_{ik}\widetilde M_{ik} V_{kj}, \quad 
D_i = \sum_{k=1}^nP_{ik}L_{ik}^\star,  \quad \widetilde D_i = \sum_{k=1}^nP_{ik}\widetilde M_{ik} .
\]
\Cref{eq:maxnormbound,eq:spectralnormbound} yield
\[
  \forall i,j\quad\left|N_{ij}-\widetilde N_{ij}\right| \leq \min\{\norm{L^\star - \widetilde M}_{\max} , \norm{P}_{2,\infty}\norm{L^\star - \widetilde M}\}\cdot\norm{V}_{\max},
\]
and \Cref{eq:maxnormbound_denom,eq:spectralnormbound_denom} yield
\[
  \forall i\quad\left|D_{i}-\widetilde D_{i}\right| \leq \min\{\norm{L^\star - \widetilde M}_{\max} , \norm{P}_{2,\infty}\norm{L^\star - \widetilde M}\}.
\]
With probability $1-\delta$ we have by \Cref{eq:maxnorm_concentration,eq:spectral_concentration} in \Cref{thm:main},
\[
  \norm{L^\star - \widetilde M}_{\max} \leq O\left(\sqrt{\frac{\log(n/\delta)}{s}}\right) \quad \text{and} \quad 
  \norm{L^\star - \widetilde M} \leq O\left(\Psi_\sigma\sqrt{\frac{\log(n/\delta)}{s}} + \frac{\log(n/\delta)}{s}\right).
\]
In the statement of \Cref{thm:alg} we assume that $s\geq\log(n/\delta)$ and hence $\frac{\log(n/\delta)}{s} \leq \sqrt{\frac{\log(n/\delta)}{s}}$. Plugging this above, we get
\begin{equation}\label{eq:ndbound}
 \forall i,j\;\;\left|N_{ij}-\widetilde N_{ij}\right| \leq \Delta_P\norm{V}_{\max} \quad \text{and} \quad \forall i\;\; \left|D_{i}-\widetilde D_{i}\right| \leq \Delta_P . 
\end{equation}
Furthermore, we have
\begin{equation}\label{eq:tbound}
  \forall i,j \quad \left|\frac{N_{ij}}{D_i}\right| =  \left|\frac{\sum_{k=1}^nP_{ik}L_{ik}^\star V_{kj}}{\sum_{k=1}^nP_{ik}L_{ik}^\star}\right|
  \leq \left|\frac{\sum_{k=1}^nP_{ik}L_{ik}^\star}{\sum_{k=1}^nP_{ik}L_{ik}^\star}\right| \cdot\norm{V}_{\max} = \norm{V}_{\max} .
\end{equation}
From \Cref{eq:ndbound,eq:tbound},
\begin{equation}\label{eq:almostthmalg}
  \left|\widetilde T_{ij}^\star - T_{ij}^\star\right|
  = \left|\frac{\widetilde N_{ij}}{\widetilde D_i} - \frac{N_{ij}}{D_i}\right|
  = \left|\frac{\widetilde N_{ij} - N_{ij}}{\widetilde D_i} + \frac{N_{ij}}{D_i}\cdot\frac{D_i - \widetilde D_i}{\widetilde D_i}\right|
 \leq \frac{2\Delta_P\norm{V}_{\max}}{\widetilde D_i} .
\end{equation}
Finally, observe that $\beta_P^\star=\min_iD_i$ in the notation of \Cref{thm:alg}. Therefore, if the condition of item 2 in the theorem holds, then by \Cref{eq:ndbound} we have for all $i$ that $\widetilde D_i \geq D_i - \left|\widetilde D_i - D_i\right| \geq \beta_P^\star - \Delta_P$. 
Plugging this in \Cref{eq:almostthmalg} yields the theorem.

\section{Generality: Positional LSH beyond ALiBi}\label{sec:generality}
So far we have focused on ALiBi as a concrete case of a popular positional bias that adheres to our positional LSH framework. In this section, we draw a distinction between the results that extend to any positional bias scheme that adheres to \Cref{sec:plsh_generic} and the results specific to ALiBi. 

\emph{What extends to any positional LSH:} Let $L$ be any positional bias matrix that satisfies \cref{eq:lsh} in \Cref{sec:plsh_generic}. Then the identity $\E_{M\sim\mathcal M}[M]=L$ in \Cref{thm:main} immediately holds. Furthermore, the max-norm convergence bound, \cref{eq:maxnorm_concentration} in \Cref{thm:main}, also holds, since its proof relies only on scalar Bernoulli concentration (\Cref{lmm:maxnorm_app}). Consequently, the uniform attention approximation guarantee of \Cref{thm:alg} holds in a slightly weaker form, with $\Delta_P'=C\sqrt{\log(n/\delta)/s}$ instead of $\Delta_P$.

\emph{What is specific to ALiBi:} 
In \Cref{thm:main}, the spectral norm convergence bounds  \cref{eq:spectral_expectation,eq:spectral_concentration} in \Cref{thm:main} and the block size bound \cref{eq:thm_blocksize} rely on the specific properties of the RBFs LSH scheme for ALiBi (\Cref{def:rbf}), particularly on properties of its underlying Gamma distribution and its interplay with the space of indices in a sequence (\Cref{sec:matrixnorms,sec:blocksize}). This also includes the contiguity of the blocks (see \Cref{sec:plsh_generic,sec:rbf}). For other positional bias and LSH schemes, the analogous results would require their own proofs. Note that the bound on the block size is key to the running time of the positionally-biased attention approximation algorithm in \Cref{thm:alg}.

It is worth remarking that our results extend fully to natural extensions of ALiBi. For example, a two-dimensional variant of ALiBi may be defined over pixels in an image as $L_{ij}^{\star\star}=\exp(-\norm{\bar x-\bar y}_1)$, where $x=(x_1,x_2)$ and $y=(y_1,y_2)$ are pixel coordinates. Since RBFs form an LSH scheme for the Laplacian kernel $\exp(-\norm{x-y}_1)$ of any dimension, our analysis will go through with minor modifications.  

\section{Experimental Validation}

\subsection{Convergence in Matrix Norms}
\label{sec:theoretical-validation}

First, we validate our theoretical results on convergence in matrix norms in \Cref{thm:main,thm:alg} directly on attention operations in pre-trained models. We use two publicly available large language models: 
Llama4-Scout-17B-16E and Mistral-7B. 
We use as inputs 30 text segments from the Wikitext-103 dataset \cite{merity2016pointer} with context lengths of 4k-4.5k and record the output of the models.
We then retrieve the corresponding $Q$,$K$,$V$ matrices for each model per input, layer and attention head.
We compute $\|L^{\star} - \widetilde{M}\|$, $\|L^{\star} - \widetilde{M}\|_{\max}$ and  $\norm{T^\star - \widetilde T^\star}_{\max}$, for varying sample sizes. 

The results are reported in Figure~\ref{fig:theoretical-validation-heads}.
They show that for both models, norm errors decay towards zero as the sample size increases. 

\begin{figure}
\centering
 
\par\noindent\centering\small\paragraph{Llama-4-Scout-17B-16E\\
}
\par\smallskip

\begin{tikzpicture}
\begin{axis}[
    width=0.24\textwidth, height=0.22\textwidth,
    title={$\sigma = 2$},
    grid=major, grid style={dotted, gray!40},
    xtick={0,2,4}, xticklabels={$1$, $10^2$, $10^4$},
    ytick={0,-1,-2,-3}, yticklabels={$10^0$, $10^{-1}$, $10^{-2}$, $10^{-3}$},
    tick label style={font=\tiny}, label style={font=\tiny},
]
\addplot[blue, thick, mark=*, mark options={fill=blue}, mark size=1pt]
    coordinates {(0,0.585)(1,0.188)(2,-0.300)(3,-0.808)(4,-1.298)(5,-1.802)};
\addplot[blue!30, draw=none, name path=normHi_l2, forget plot]
    coordinates {(0,0.769)(1,0.269)(2,-0.240)(3,-0.748)(4,-1.224)(5,-1.745)};
\addplot[blue!30, draw=none, name path=normLo_l2, forget plot]
    coordinates {(0,0.259)(1,0.087)(2,-0.369)(3,-0.877)(4,-1.387)(5,-1.869)};
\addplot[blue!30, opacity=0.5, forget plot] fill between[of=normHi_l2 and normLo_l2];
\addplot[green!60!black, thick, mark=triangle*, mark options={fill=green!60!black}, mark size=1pt]
    coordinates {(0,-0.099)(1,-0.293)(2,-0.772)(3,-1.266)(4,-1.753)(5,-2.253)};
\addplot[green!30, draw=none, name path=mnHi_l2, forget plot]
    coordinates {(0,-0.023)(1,-0.244)(2,-0.719)(3,-1.227)(4,-1.710)(5,-2.200)};
\addplot[green!30, draw=none, name path=mnLo_l2, forget plot]
    coordinates {(0,-0.193)(1,-0.348)(2,-0.833)(3,-1.310)(4,-1.801)(5,-2.312)};
\addplot[green!60!black, opacity=0.2, forget plot] fill between[of=mnHi_l2 and mnLo_l2];
\addplot[red!70, thick, mark=square*, mark options={fill=red!70}, mark size=1pt, dashed]
    coordinates {(0,0.260)(1,-0.334)(2,-0.835)(3,-1.335)(4,-1.816)(5,-2.329)};
\addplot[red!30, draw=none, name path=softHi_l2, forget plot]
    coordinates {(0,0.440)(1,-0.246)(2,-0.768)(3,-1.270)(4,-1.721)(5,-2.263)};
\addplot[red!30, draw=none, name path=softLo_l2, forget plot]
    coordinates {(0,-0.053)(1,-0.445)(2,-0.915)(3,-1.413)(4,-1.939)(5,-2.407)};
\addplot[red!30, opacity=0.5, forget plot] fill between[of=softHi_l2 and softLo_l2];
\end{axis}
\end{tikzpicture}
\hfill
\begin{tikzpicture}
\begin{axis}[
    width=0.24\textwidth, height=0.22\textwidth,
    title={$\sigma = 8$},
    grid=major, grid style={dotted, gray!40},
    xtick={0,2,4}, xticklabels={$1$, $10^2$, $10^4$},
    ytick={1,0,-1,-2,-3}, yticklabels={$10^1$, $10^0$, $10^{-1}$, $10^{-2}$, $10^{-3}$},
    tick label style={font=\tiny}, label style={font=\tiny},
]
\addplot[blue, thick, mark=*, mark options={fill=blue}, mark size=1pt]
    coordinates {(0,1.137)(1,0.731)(2,0.264)(3,-0.213)(4,-0.737)(5,-1.233)};
\addplot[blue!30, draw=none, name path=normHi_l8, forget plot]
    coordinates {(0,1.291)(1,0.836)(2,0.342)(3,-0.135)(4,-0.669)(5,-1.177)};
\addplot[blue!30, draw=none, name path=normLo_l8, forget plot]
    coordinates {(0,0.896)(1,0.594)(2,0.168)(3,-0.307)(4,-0.818)(5,-1.297)};
\addplot[blue!30, opacity=0.5, forget plot] fill between[of=normHi_l8 and normLo_l8];
\addplot[green!60!black, thick, mark=triangle*, mark options={fill=green!60!black}, mark size=1pt]
    coordinates {(0,-0.044)(1,-0.254)(2,-0.740)(3,-1.235)(4,-1.741)(5,-2.228)};
\addplot[green!30, draw=none, name path=mnHi_l8, forget plot]
    coordinates {(0,-0.025)(1,-0.222)(2,-0.699)(3,-1.188)(4,-1.708)(5,-2.185)};
\addplot[green!30, draw=none, name path=mnLo_l8, forget plot]
    coordinates {(0,-0.063)(1,-0.289)(2,-0.785)(3,-1.288)(4,-1.777)(5,-2.275)};
\addplot[green!60!black, opacity=0.2, forget plot] fill between[of=mnHi_l8 and mnLo_l8];
\addplot[red!70, thick, mark=square*, mark options={fill=red!70}, mark size=1pt, dashed]
    coordinates {(0,0.262)(1,-0.424)(2,-0.890)(3,-1.387)(4,-1.907)(5,-2.387)};
\addplot[red!30, draw=none, name path=softHi_l8, forget plot]
    coordinates {(0,0.489)(1,-0.356)(2,-0.799)(3,-1.299)(4,-1.856)(5,-2.327)};
\addplot[red!30, draw=none, name path=softLo_l8, forget plot]
    coordinates {(0,-0.243)(1,-0.505)(2,-1.006)(3,-1.499)(4,-1.966)(5,-2.458)};
\addplot[red!30, opacity=0.5, forget plot] fill between[of=softHi_l8 and softLo_l8];
\end{axis}
\end{tikzpicture}
\hfill
\begin{tikzpicture}
\begin{axis}[
    width=0.24\textwidth, height=0.22\textwidth,
    title={$\sigma = 32$},
    grid=major, grid style={dotted, gray!40},
    xtick={0,2,4}, xticklabels={$1$, $10^2$, $10^4$},
    ytick={2,1,0,-1,-2,-3}, yticklabels={$10^2$, $10^1$, $10^0$, $10^{-1}$, $10^{-2}$, $10^{-3}$},
    tick label style={font=\tiny}, label style={font=\tiny},
]
\addplot[blue, thick, mark=*, mark options={fill=blue}, mark size=1pt]
    coordinates {(0,1.756)(1,1.332)(2,0.866)(3,0.313)(4,-0.153)(5,-0.673)};
\addplot[blue!30, draw=none, name path=normHi_l32, forget plot]
    coordinates {(0,1.901)(1,1.433)(2,0.948)(3,0.376)(4,-0.087)(5,-0.589)};
\addplot[blue!30, draw=none, name path=normLo_l32, forget plot]
    coordinates {(0,1.538)(1,1.200)(2,0.765)(3,0.238)(4,-0.232)(5,-0.778)};
\addplot[blue!30, opacity=0.5, forget plot] fill between[of=normHi_l32 and normLo_l32];
\addplot[green!60!black, thick, mark=triangle*, mark options={fill=green!60!black}, mark size=1pt]
    coordinates {(0,-0.013)(1,-0.261)(2,-0.723)(3,-1.251)(4,-1.715)(5,-2.241)};
\addplot[green!30, draw=none, name path=mnHi_l32, forget plot]
    coordinates {(0,-0.011)(1,-0.218)(2,-0.678)(3,-1.208)(4,-1.669)(5,-2.186)};
\addplot[green!30, draw=none, name path=mnLo_l32, forget plot]
    coordinates {(0,-0.015)(1,-0.307)(2,-0.773)(3,-1.299)(4,-1.766)(5,-2.304)};
\addplot[green!60!black, opacity=0.2, forget plot] fill between[of=mnHi_l32 and mnLo_l32];
\addplot[red!70, thick, mark=square*, mark options={fill=red!70}, mark size=1pt, dashed]
    coordinates {(0,-0.113)(1,-0.730)(2,-1.187)(3,-1.727)(4,-2.192)(5,-2.723)};
\addplot[red!30, draw=none, name path=softHi_l32, forget plot]
    coordinates {(0,0.091)(1,-0.654)(2,-1.121)(3,-1.652)(4,-2.126)(5,-2.654)};
\addplot[red!30, draw=none, name path=softLo_l32, forget plot]
    coordinates {(0,-0.511)(1,-0.822)(2,-1.266)(3,-1.819)(4,-2.270)(5,-2.804)};
\addplot[red!30, opacity=0.5, forget plot] fill between[of=softHi_l32 and softLo_l32];
\end{axis}
\end{tikzpicture}
\hfill
\begin{tikzpicture}
\begin{axis}[
    width=0.24\textwidth, height=0.22\textwidth,
    title={$\sigma = 128$},
    grid=major, grid style={dotted, gray!40},
    xtick={0,2,4}, xticklabels={$1$, $10^2$, $10^4$},
    ytick={2,1,0,-1,-2,-3}, yticklabels={$10^2$, $10^1$, $10^0$, $10^{-1}$, $10^{-2}$, $10^{-3}$},
    tick label style={font=\tiny}, label style={font=\tiny},
]
\addplot[blue, thick, mark=*, mark options={fill=blue}, mark size=1pt]
    coordinates {(0,2.351)(1,1.877)(2,1.361)(3,0.879)(4,0.395)(5,-0.132)};
\addplot[blue!30, draw=none, name path=normHi_l128, forget plot]
    coordinates {(0,2.545)(1,1.960)(2,1.449)(3,0.977)(4,0.480)(5,-0.054)};
\addplot[blue!30, draw=none, name path=normLo_l128, forget plot]
    coordinates {(0,1.991)(1,1.773)(2,1.250)(3,0.750)(4,0.290)(5,-0.227)};
\addplot[blue!30, opacity=0.5, forget plot] fill between[of=normHi_l128 and normLo_l128];
\addplot[green!60!black, thick, mark=triangle*, mark options={fill=green!60!black}, mark size=1pt]
    coordinates {(0,-0.003)(1,-0.271)(2,-0.781)(3,-1.270)(4,-1.763)(5,-2.266)};
\addplot[green!30, draw=none, name path=mnHi_l128, forget plot]
    coordinates {(0,-0.003)(1,-0.223)(2,-0.735)(3,-1.218)(4,-1.719)(5,-2.221)};
\addplot[green!30, draw=none, name path=mnLo_l128, forget plot]
    coordinates {(0,-0.004)(1,-0.325)(2,-0.832)(3,-1.328)(4,-1.811)(5,-2.317)};
\addplot[green!60!black, opacity=0.2, forget plot] fill between[of=mnHi_l128 and mnLo_l128];
\addplot[red!70, thick, mark=square*, mark options={fill=red!70}, mark size=1pt, dashed]
    coordinates {(0,-0.288)(1,-0.937)(2,-1.501)(3,-1.979)(4,-2.465)(5,-2.962)};
\addplot[red!30, draw=none, name path=softHi_l128, forget plot]
    coordinates {(0,-0.076)(1,-0.850)(2,-1.429)(3,-1.919)(4,-2.394)(5,-2.881)};
\addplot[red!30, draw=none, name path=softLo_l128, forget plot]
    coordinates {(0,-0.720)(1,-1.046)(2,-1.588)(3,-2.048)(4,-2.550)(5,-3.062)};
\addplot[red!30, opacity=0.5, forget plot] fill between[of=softHi_l128 and softLo_l128];
\end{axis}
\end{tikzpicture}

\begin{tikzpicture}
\begin{axis}[
    width=0.24\textwidth, height=0.22\textwidth,
    title={$\sigma = 4$},
    grid=major, grid style={dotted, gray!40},
    xtick={0,2,4}, xticklabels={$1$, $10^2$, $10^4$},
    ytick={1,0,-1,-2,-3}, yticklabels={$10^1$, $10^0$, $10^{-1}$, $10^{-2}$, $10^{-3}$},
    tick label style={font=\tiny}, label style={font=\tiny},
]
\addplot[blue, thick, mark=*, mark options={fill=blue}, mark size=1pt]
    coordinates {(0,0.891)(1,0.468)(2,-0.021)(3,-0.551)(4,-1.010)(5,-1.511)};
\addplot[blue!30, draw=none, name path=normHi_l4, forget plot]
    coordinates {(0,1.106)(1,0.560)(2,0.059)(3,-0.497)(4,-0.919)(5,-1.441)};
\addplot[blue!30, draw=none, name path=normLo_l4, forget plot]
    coordinates {(0,0.450)(1,0.352)(2,-0.120)(3,-0.613)(4,-1.125)(5,-1.595)};
\addplot[blue!30, opacity=0.5, forget plot] fill between[of=normHi_l4 and normLo_l4];
\addplot[green!60!black, thick, mark=triangle*, mark options={fill=green!60!black}, mark size=1pt]
    coordinates {(0,-0.070)(1,-0.259)(2,-0.740)(3,-1.255)(4,-1.750)(5,-2.245)};
\addplot[green!30, draw=none, name path=mnHi_l4, forget plot]
    coordinates {(0,-0.027)(1,-0.210)(2,-0.687)(3,-1.215)(4,-1.697)(5,-2.187)};
\addplot[green!30, draw=none, name path=mnLo_l4, forget plot]
    coordinates {(0,-0.118)(1,-0.313)(2,-0.802)(3,-1.301)(4,-1.811)(5,-2.312)};
\addplot[green!60!black, opacity=0.2, forget plot] fill between[of=mnHi_l4 and mnLo_l4];
\addplot[red!70, thick, mark=square*, mark options={fill=red!70}, mark size=1pt, dashed]
    coordinates {(0,0.336)(1,-0.205)(2,-0.732)(3,-1.260)(4,-1.730)(5,-2.225)};
\addplot[red!30, draw=none, name path=softHi_l4, forget plot]
    coordinates {(0,0.515)(1,-0.095)(2,-0.667)(3,-1.195)(4,-1.646)(5,-2.141)};
\addplot[red!30, draw=none, name path=softLo_l4, forget plot]
    coordinates {(0,0.026)(1,-0.352)(2,-0.809)(3,-1.336)(4,-1.835)(5,-2.328)};
\addplot[red!30, opacity=0.5, forget plot] fill between[of=softHi_l4 and softLo_l4];
\end{axis}
\end{tikzpicture}
\hfill
\begin{tikzpicture}
\begin{axis}[
    width=0.24\textwidth, height=0.22\textwidth,
    title={$\sigma = 16$},
    grid=major, grid style={dotted, gray!40},
    xtick={0,2,4}, xticklabels={$1$, $10^2$, $10^4$},
    ytick={2,1,0,-1,-2,-3}, yticklabels={$10^2$, $10^1$, $10^0$, $10^{-1}$, $10^{-2}$, $10^{-3}$},
    tick label style={font=\tiny}, label style={font=\tiny},
]
\addplot[blue, thick, mark=*, mark options={fill=blue}, mark size=1pt]
    coordinates {(0,1.450)(1,1.038)(2,0.528)(3,0.053)(4,-0.470)(5,-0.941)};
\addplot[blue!30, draw=none, name path=normHi_l16, forget plot]
    coordinates {(0,1.601)(1,1.158)(2,0.576)(3,0.122)(4,-0.417)(5,-0.839)};
\addplot[blue!30, draw=none, name path=normLo_l16, forget plot]
    coordinates {(0,1.217)(1,0.873)(2,0.475)(3,-0.027)(4,-0.530)(5,-1.075)};
\addplot[blue!30, opacity=0.5, forget plot] fill between[of=normHi_l16 and normLo_l16];
\addplot[green!60!black, thick, mark=triangle*, mark options={fill=green!60!black}, mark size=1pt]
    coordinates {(0,-0.024)(1,-0.261)(2,-0.737)(3,-1.237)(4,-1.743)(5,-2.237)};
\addplot[green!30, draw=none, name path=mnHi_l16, forget plot]
    coordinates {(0,-0.018)(1,-0.225)(2,-0.706)(3,-1.200)(4,-1.711)(5,-2.187)};
\addplot[green!30, draw=none, name path=mnLo_l16, forget plot]
    coordinates {(0,-0.031)(1,-0.299)(2,-0.769)(3,-1.276)(4,-1.777)(5,-2.293)};
\addplot[green!60!black, opacity=0.2, forget plot] fill between[of=mnHi_l16 and mnLo_l16];
\addplot[red!70, thick, mark=square*, mark options={fill=red!70}, mark size=1pt, dashed]
    coordinates {(0,-0.002)(1,-0.560)(2,-1.075)(3,-1.555)(4,-2.077)(5,-2.553)};
\addplot[red!30, draw=none, name path=softHi_l16, forget plot]
    coordinates {(0,0.185)(1,-0.486)(2,-1.011)(3,-1.471)(4,-2.024)(5,-2.471)};
\addplot[red!30, draw=none, name path=softLo_l16, forget plot]
    coordinates {(0,-0.339)(1,-0.650)(2,-1.151)(3,-1.660)(4,-2.138)(5,-2.654)};
\addplot[red!30, opacity=0.5, forget plot] fill between[of=softHi_l16 and softLo_l16];
\end{axis}
\end{tikzpicture}
\hfill
\begin{tikzpicture}
\begin{axis}[
    width=0.24\textwidth, height=0.22\textwidth,
    title={$\sigma = 64$},
    grid=major, grid style={dotted, gray!40},
    xtick={0,2,4}, xticklabels={$1$, $10^2$, $10^4$},
    ytick={2,1,0,-1,-2,-3}, yticklabels={$10^2$, $10^1$, $10^0$, $10^{-1}$, $10^{-2}$, $10^{-3}$},
    tick label style={font=\tiny}, label style={font=\tiny},
]
\addplot[blue, thick, mark=*, mark options={fill=blue}, mark size=1pt]
    coordinates {(0,2.045)(1,1.612)(2,1.137)(3,0.643)(4,0.131)(5,-0.418)};
\addplot[blue!30, draw=none, name path=normHi_l64, forget plot]
    coordinates {(0,2.194)(1,1.763)(2,1.220)(3,0.730)(4,0.212)(5,-0.355)};
\addplot[blue!30, draw=none, name path=normLo_l64, forget plot]
    coordinates {(0,1.817)(1,1.378)(2,1.035)(3,0.535)(4,0.031)(5,-0.492)};
\addplot[blue!30, opacity=0.5, forget plot] fill between[of=normHi_l64 and normLo_l64];
\addplot[green!60!black, thick, mark=triangle*, mark options={fill=green!60!black}, mark size=1pt]
    coordinates {(0,-0.007)(1,-0.275)(2,-0.749)(3,-1.234)(4,-1.740)(5,-2.264)};
\addplot[green!30, draw=none, name path=mnHi_l64, forget plot]
    coordinates {(0,-0.006)(1,-0.234)(2,-0.704)(3,-1.179)(4,-1.698)(5,-2.223)};
\addplot[green!30, draw=none, name path=mnLo_l64, forget plot]
    coordinates {(0,-0.007)(1,-0.320)(2,-0.798)(3,-1.298)(4,-1.787)(5,-2.309)};
\addplot[green!60!black, opacity=0.2, forget plot] fill between[of=mnHi_l64 and mnLo_l64];
\addplot[red!70, thick, mark=square*, mark options={fill=red!70}, mark size=1pt, dashed]
    coordinates {(0,-0.283)(1,-0.848)(2,-1.344)(3,-1.833)(4,-2.330)(5,-2.850)};
\addplot[red!30, draw=none, name path=softHi_l64, forget plot]
    coordinates {(0,-0.110)(1,-0.780)(2,-1.271)(3,-1.771)(4,-2.251)(5,-2.774)};
\addplot[red!30, draw=none, name path=softLo_l64, forget plot]
    coordinates {(0,-0.574)(1,-0.929)(2,-1.430)(3,-1.905)(4,-2.426)(5,-2.941)};
\addplot[red!30, opacity=0.5, forget plot] fill between[of=softHi_l64 and softLo_l64];
\end{axis}
\end{tikzpicture}
\hfill
\begin{tikzpicture}
\begin{axis}[
    width=0.24\textwidth, height=0.22\textwidth,
    title={$\sigma = 256$},
    grid=major, grid style={dotted, gray!40},
    xtick={0,2,4}, xticklabels={$1$, $10^2$, $10^4$},
    ytick={3,2,1,0,-1,-2,-3}, yticklabels={$10^3$, $10^2$, $10^1$, $10^0$, $10^{-1}$, $10^{-2}$, $10^{-3}$},
    tick label style={font=\tiny}, label style={font=\tiny},
]
\addplot[blue, thick, mark=*, mark options={fill=blue}, mark size=1pt]
    coordinates {(0,2.595)(1,2.129)(2,1.644)(3,1.158)(4,0.625)(5,0.165)};
\addplot[blue!30, draw=none, name path=normHi_l256, forget plot]
    coordinates {(0,2.755)(1,2.198)(2,1.766)(3,1.254)(4,0.719)(5,0.266)};
\addplot[blue!30, draw=none, name path=normLo_l256, forget plot]
    coordinates {(0,2.340)(1,2.046)(2,1.473)(3,1.035)(4,0.503)(5,0.034)};
\addplot[blue!30, opacity=0.5, forget plot] fill between[of=normHi_l256 and normLo_l256];
\addplot[green!60!black, thick, mark=triangle*, mark options={fill=green!60!black}, mark size=1pt]
    coordinates {(0,-0.002)(1,-0.303)(2,-0.770)(3,-1.279)(4,-1.795)(5,-2.270)};
\addplot[green!30, draw=none, name path=mnHi_l256, forget plot]
    coordinates {(0,-0.002)(1,-0.245)(2,-0.720)(3,-1.230)(4,-1.739)(5,-2.217)};
\addplot[green!30, draw=none, name path=mnLo_l256, forget plot]
    coordinates {(0,-0.002)(1,-0.370)(2,-0.825)(3,-1.333)(4,-1.858)(5,-2.330)};
\addplot[green!60!black, opacity=0.2, forget plot] fill between[of=mnHi_l256 and mnLo_l256];
\addplot[red!70, thick, mark=square*, mark options={fill=red!70}, mark size=1pt, dashed]
    coordinates {(0,-0.503)(1,-1.063)(2,-1.573)(3,-2.075)(4,-2.551)(5,-3.036)};
\addplot[red!30, draw=none, name path=softHi_l256, forget plot]
    coordinates {(0,-0.362)(1,-0.938)(2,-1.461)(3,-1.968)(4,-2.448)(5,-2.927)};
\addplot[red!30, draw=none, name path=softLo_l256, forget plot]
    coordinates {(0,-0.714)(1,-1.240)(2,-1.724)(3,-2.216)(4,-2.687)(5,-3.181)};
\addplot[red!30, opacity=0.5, forget plot] fill between[of=softHi_l256 and softLo_l256];
\end{axis}
\end{tikzpicture}
 
\par\medskip
\par\noindent\centering\small\paragraph{Mistral-7B\\
}
\par\smallskip
 
\begin{tikzpicture}
\begin{axis}[
    width=0.24\textwidth, height=0.22\textwidth,
    title={$\sigma = 2$},
    grid=major, grid style={dotted, gray!40},
    xtick={0,2,4}, xticklabels={$1$, $10^2$, $10^4$},
    ytick={1,0,-1,-2}, yticklabels={$10^1$, $10^0$, $10^{-1}$, $10^{-2}$},
    tick label style={font=\tiny}, label style={font=\tiny},
]
\addplot[blue, thick, mark=*, mark options={fill=blue}, mark size=1pt]
    coordinates {(0,0.590)(1,0.190)(2,-0.288)(3,-0.802)(4,-1.317)(5,-1.798)};
\addplot[blue!30, draw=none, name path=normHi_m2, forget plot]
    coordinates {(0,0.791)(1,0.292)(2,-0.206)(3,-0.750)(4,-1.265)(5,-1.744)};
\addplot[blue!30, draw=none, name path=normLo_m2, forget plot]
    coordinates {(0,0.204)(1,0.058)(2,-0.389)(3,-0.862)(4,-1.376)(5,-1.859)};
\addplot[blue!30, opacity=0.5, forget plot] fill between[of=normHi_m2 and normLo_m2];
\addplot[green!60!black, thick, mark=triangle*, mark options={fill=green!60!black}, mark size=1pt]
    coordinates {(0,-0.107)(1,-0.280)(2,-0.760)(3,-1.250)(4,-1.743)(5,-2.259)};
\addplot[green!30, draw=none, name path=mnHi_m2, forget plot]
    coordinates {(0,-0.030)(1,-0.236)(2,-0.714)(3,-1.213)(4,-1.703)(5,-2.213)};
\addplot[green!30, draw=none, name path=mnLo_m2, forget plot]
    coordinates {(0,-0.203)(1,-0.330)(2,-0.812)(3,-1.291)(4,-1.787)(5,-2.310)};
\addplot[green!60!black, opacity=0.2, forget plot] fill between[of=mnHi_m2 and mnLo_m2];
\addplot[red!70, thick, mark=square*, mark options={fill=red!70}, mark size=1pt, dashed]
    coordinates {(0,0.710)(1,0.114)(2,-0.447)(3,-0.940)(4,-1.465)(5,-1.958)};
\addplot[red!30, draw=none, name path=softHi_m2, forget plot]
    coordinates {(0,0.939)(1,0.254)(2,-0.372)(3,-0.870)(4,-1.404)(5,-1.884)};
\addplot[red!30, draw=none, name path=softLo_m2, forget plot]
    coordinates {(0,0.191)(1,-0.095)(2,-0.538)(3,-1.024)(4,-1.536)(5,-2.047)};
\addplot[red!30, opacity=0.5, forget plot] fill between[of=softHi_m2 and softLo_m2];
\end{axis}
\end{tikzpicture}
\hfill
\begin{tikzpicture}
\begin{axis}[
    width=0.24\textwidth, height=0.22\textwidth,
    title={$\sigma = 8$},
    grid=major, grid style={dotted, gray!40},
    xtick={0,2,4}, xticklabels={$1$, $10^2$, $10^4$},
    ytick={1,0,-1,-2}, yticklabels={$10^1$, $10^0$, $10^{-1}$, $10^{-2}$},
    tick label style={font=\tiny}, label style={font=\tiny},
]
\addplot[blue, thick, mark=*, mark options={fill=blue}, mark size=1pt]
    coordinates {(0,1.098)(1,0.727)(2,0.280)(3,-0.261)(4,-0.732)(5,-1.233)};
\addplot[blue!30, draw=none, name path=normHi_m8, forget plot]
    coordinates {(0,1.241)(1,0.817)(2,0.342)(3,-0.203)(4,-0.654)(5,-1.155)};
\addplot[blue!30, draw=none, name path=normLo_m8, forget plot]
    coordinates {(0,0.884)(1,0.613)(2,0.208)(3,-0.328)(4,-0.828)(5,-1.329)};
\addplot[blue!30, opacity=0.5, forget plot] fill between[of=normHi_m8 and normLo_m8];
\addplot[green!60!black, thick, mark=triangle*, mark options={fill=green!60!black}, mark size=1pt]
    coordinates {(0,-0.047)(1,-0.268)(2,-0.725)(3,-1.239)(4,-1.729)(5,-2.233)};
\addplot[green!30, draw=none, name path=mnHi_m8, forget plot]
    coordinates {(0,-0.032)(1,-0.230)(2,-0.687)(3,-1.199)(4,-1.687)(5,-2.190)};
\addplot[green!30, draw=none, name path=mnLo_m8, forget plot]
    coordinates {(0,-0.061)(1,-0.310)(2,-0.768)(3,-1.283)(4,-1.776)(5,-2.280)};
\addplot[green!60!black, opacity=0.2, forget plot] fill between[of=mnHi_m8 and mnLo_m8];
\addplot[red!70, thick, mark=square*, mark options={fill=red!70}, mark size=1pt, dashed]
    coordinates {(0,0.636)(1,0.137)(2,-0.378)(3,-0.881)(4,-1.335)(5,-1.862)};
\addplot[red!30, draw=none, name path=softHi_m8, forget plot]
    coordinates {(0,0.745)(1,0.226)(2,-0.312)(3,-0.817)(4,-1.248)(5,-1.776)};
\addplot[red!30, draw=none, name path=softLo_m8, forget plot]
    coordinates {(0,0.492)(1,0.024)(2,-0.456)(3,-0.955)(4,-1.443)(5,-1.969)};
\addplot[red!30, opacity=0.5, forget plot] fill between[of=softHi_m8 and softLo_m8];
\end{axis}
\end{tikzpicture}
\hfill
\begin{tikzpicture}
\begin{axis}[
    width=0.24\textwidth, height=0.22\textwidth,
    title={$\sigma = 32$},
    grid=major, grid style={dotted, gray!40},
    xtick={0,2,4}, xticklabels={$1$, $10^2$, $10^4$},
    ytick={2,1,0,-1,-2}, yticklabels={$10^2$, $10^1$, $10^0$, $10^{-1}$, $10^{-2}$},
    tick label style={font=\tiny}, label style={font=\tiny},
]
\addplot[blue, thick, mark=*, mark options={fill=blue}, mark size=1pt]
    coordinates {(0,1.680)(1,1.333)(2,0.836)(3,0.319)(4,-0.146)(5,-0.684)};
\addplot[blue!30, draw=none, name path=normHi_m32, forget plot]
    coordinates {(0,1.819)(1,1.426)(2,0.927)(3,0.391)(4,-0.045)(5,-0.627)};
\addplot[blue!30, draw=none, name path=normLo_m32, forget plot]
    coordinates {(0,1.474)(1,1.214)(2,0.721)(3,0.232)(4,-0.278)(5,-0.749)};
\addplot[blue!30, opacity=0.5, forget plot] fill between[of=normHi_m32 and normLo_m32];
\addplot[green!60!black, thick, mark=triangle*, mark options={fill=green!60!black}, mark size=1pt]
    coordinates {(0,-0.013)(1,-0.240)(2,-0.746)(3,-1.236)(4,-1.733)(5,-2.248)};
\addplot[green!30, draw=none, name path=mnHi_m32, forget plot]
    coordinates {(0,-0.012)(1,-0.202)(2,-0.697)(3,-1.201)(4,-1.682)(5,-2.204)};
\addplot[green!30, draw=none, name path=mnLo_m32, forget plot]
    coordinates {(0,-0.015)(1,-0.281)(2,-0.800)(3,-1.275)(4,-1.792)(5,-2.298)};
\addplot[green!60!black, opacity=0.2, forget plot] fill between[of=mnHi_m32 and mnLo_m32];
\addplot[red!70, thick, mark=square*, mark options={fill=red!70}, mark size=1pt, dashed]
    coordinates {(0,0.402)(1,-0.119)(2,-0.655)(3,-1.161)(4,-1.661)(5,-2.163)};
\addplot[red!30, draw=none, name path=softHi_m32, forget plot]
    coordinates {(0,0.539)(1,-0.020)(2,-0.575)(3,-1.103)(4,-1.584)(5,-2.101)};
\addplot[red!30, draw=none, name path=softLo_m32, forget plot]
    coordinates {(0,0.202)(1,-0.249)(2,-0.753)(3,-1.228)(4,-1.755)(5,-2.235)};
\addplot[red!30, opacity=0.5, forget plot] fill between[of=softHi_m32 and softLo_m32];
\end{axis}
\end{tikzpicture}
\hfill
\begin{tikzpicture}
\begin{axis}[
    width=0.24\textwidth, height=0.22\textwidth,
    title={$\sigma = 128$},
    grid=major, grid style={dotted, gray!40},
    xtick={0,2,4}, xticklabels={$1$, $10^2$, $10^4$},
    ytick={2,1,0,-1,-2}, yticklabels={$10^2$, $10^1$, $10^0$, $10^{-1}$, $10^{-2}$},
    tick label style={font=\tiny}, label style={font=\tiny},
]
\addplot[blue, thick, mark=*, mark options={fill=blue}, mark size=1pt]
    coordinates {(0,2.367)(1,1.883)(2,1.393)(3,0.896)(4,0.362)(5,-0.103)};
\addplot[blue!30, draw=none, name path=normHi_m128, forget plot]
    coordinates {(0,2.562)(1,1.976)(2,1.482)(3,0.977)(4,0.434)(5,0.014)};
\addplot[blue!30, draw=none, name path=normLo_m128, forget plot]
    coordinates {(0,2.006)(1,1.764)(2,1.281)(3,0.796)(4,0.277)(5,-0.263)};
\addplot[blue!30, opacity=0.5, forget plot] fill between[of=normHi_m128 and normLo_m128];
\addplot[green!60!black, thick, mark=triangle*, mark options={fill=green!60!black}, mark size=1pt]
    coordinates {(0,-0.003)(1,-0.262)(2,-0.752)(3,-1.266)(4,-1.760)(5,-2.263)};
\addplot[green!30, draw=none, name path=mnHi_m128, forget plot]
    coordinates {(0,-0.003)(1,-0.216)(2,-0.715)(3,-1.230)(4,-1.716)(5,-2.208)};
\addplot[green!30, draw=none, name path=mnLo_m128, forget plot]
    coordinates {(0,-0.004)(1,-0.313)(2,-0.792)(3,-1.304)(4,-1.810)(5,-2.325)};
\addplot[green!60!black, opacity=0.2, forget plot] fill between[of=mnHi_m128 and mnLo_m128];
\addplot[red!70, thick, mark=square*, mark options={fill=red!70}, mark size=1pt, dashed]
    coordinates {(0,0.180)(1,-0.354)(2,-0.837)(3,-1.325)(4,-1.862)(5,-2.356)};
\addplot[red!30, draw=none, name path=softHi_m128, forget plot]
    coordinates {(0,0.399)(1,-0.280)(2,-0.742)(3,-1.229)(4,-1.804)(5,-2.282)};
\addplot[red!30, draw=none, name path=softLo_m128, forget plot]
    coordinates {(0,-0.282)(1,-0.442)(2,-0.960)(3,-1.447)(4,-1.930)(5,-2.446)};
\addplot[red!30, opacity=0.5, forget plot] fill between[of=softHi_m128 and softLo_m128];
\end{axis}
\end{tikzpicture}

\begin{tikzpicture}
\begin{axis}[
    width=0.24\textwidth, height=0.22\textwidth,
    title={$\sigma = 4$},
    grid=major, grid style={dotted, gray!40},
    xtick={0,2,4}, xticklabels={$1$, $10^2$, $10^4$},
    ytick={1,0,-1,-2}, yticklabels={$10^1$, $10^0$, $10^{-1}$, $10^{-2}$},
    tick label style={font=\tiny}, label style={font=\tiny},
]
\addplot[blue, thick, mark=*, mark options={fill=blue}, mark size=1pt]
    coordinates {(0,0.854)(1,0.465)(2,-0.010)(3,-0.517)(4,-1.033)(5,-1.529)};
\addplot[blue!30, draw=none, name path=normHi_m4, forget plot]
    coordinates {(0,1.004)(1,0.541)(2,0.069)(3,-0.455)(4,-0.980)(5,-1.480)};
\addplot[blue!30, draw=none, name path=normLo_m4, forget plot]
    coordinates {(0,0.622)(1,0.372)(2,-0.106)(3,-0.590)(4,-1.094)(5,-1.584)};
\addplot[blue!30, opacity=0.5, forget plot] fill between[of=normHi_m4 and normLo_m4];
\addplot[green!60!black, thick, mark=triangle*, mark options={fill=green!60!black}, mark size=1pt]
    coordinates {(0,-0.072)(1,-0.290)(2,-0.743)(3,-1.243)(4,-1.730)(5,-2.238)};
\addplot[green!30, draw=none, name path=mnHi_m4, forget plot]
    coordinates {(0,-0.032)(1,-0.247)(2,-0.700)(3,-1.199)(4,-1.685)(5,-2.194)};
\addplot[green!30, draw=none, name path=mnLo_m4, forget plot]
    coordinates {(0,-0.117)(1,-0.337)(2,-0.791)(3,-1.292)(4,-1.780)(5,-2.288)};
\addplot[green!60!black, opacity=0.2, forget plot] fill between[of=mnHi_m4 and mnLo_m4];
\addplot[red!70, thick, mark=square*, mark options={fill=red!70}, mark size=1pt, dashed]
    coordinates {(0,0.913)(1,0.272)(2,-0.225)(3,-0.724)(4,-1.212)(5,-1.734)};
\addplot[red!30, draw=none, name path=softHi_m4, forget plot]
    coordinates {(0,1.119)(1,0.371)(2,-0.152)(3,-0.665)(4,-1.159)(5,-1.687)};
\addplot[red!30, draw=none, name path=softLo_m4, forget plot]
    coordinates {(0,0.510)(1,0.143)(2,-0.311)(3,-0.791)(4,-1.272)(5,-1.787)};
\addplot[red!30, opacity=0.5, forget plot] fill between[of=softHi_m4 and softLo_m4];
\end{axis}
\end{tikzpicture}
\hfill
\begin{tikzpicture}
\begin{axis}[
    width=0.24\textwidth, height=0.22\textwidth,
    title={$\sigma = 16$},
    grid=major, grid style={dotted, gray!40},
    xtick={0,2,4}, xticklabels={$1$, $10^2$, $10^4$},
    ytick={2,1,0,-1,-2}, yticklabels={$10^2$, $10^1$, $10^0$, $10^{-1}$, $10^{-2}$},
    tick label style={font=\tiny}, label style={font=\tiny},
]
\addplot[blue, thick, mark=*, mark options={fill=blue}, mark size=1pt]
    coordinates {(0,1.479)(1,1.021)(2,0.545)(3,0.061)(4,-0.436)(5,-0.957)};
\addplot[blue!30, draw=none, name path=normHi_m16, forget plot]
    coordinates {(0,1.698)(1,1.106)(2,0.615)(3,0.130)(4,-0.344)(5,-0.896)};
\addplot[blue!30, draw=none, name path=normLo_m16, forget plot]
    coordinates {(0,1.013)(1,0.914)(2,0.463)(3,-0.022)(4,-0.553)(5,-1.029)};
\addplot[blue!30, opacity=0.5, forget plot] fill between[of=normHi_m16 and normLo_m16];
\addplot[green!60!black, thick, mark=triangle*, mark options={fill=green!60!black}, mark size=1pt]
    coordinates {(0,-0.023)(1,-0.251)(2,-0.734)(3,-1.239)(4,-1.734)(5,-2.234)};
\addplot[green!30, draw=none, name path=mnHi_m16, forget plot]
    coordinates {(0,-0.014)(1,-0.216)(2,-0.705)(3,-1.196)(4,-1.682)(5,-2.196)};
\addplot[green!30, draw=none, name path=mnLo_m16, forget plot]
    coordinates {(0,-0.031)(1,-0.289)(2,-0.766)(3,-1.286)(4,-1.794)(5,-2.277)};
\addplot[green!60!black, opacity=0.2, forget plot] fill between[of=mnHi_m16 and mnLo_m16];
\addplot[red!70, thick, mark=square*, mark options={fill=red!70}, mark size=1pt, dashed]
    coordinates {(0,0.523)(1,-0.002)(2,-0.521)(3,-1.016)(4,-1.524)(5,-2.028)};
\addplot[red!30, draw=none, name path=softHi_m16, forget plot]
    coordinates {(0,0.691)(1,0.112)(2,-0.453)(3,-0.944)(4,-1.448)(5,-1.942)};
\addplot[red!30, draw=none, name path=softLo_m16, forget plot]
    coordinates {(0,0.246)(1,-0.159)(2,-0.601)(3,-1.103)(4,-1.615)(5,-2.136)};
\addplot[red!30, opacity=0.5, forget plot] fill between[of=softHi_m16 and softLo_m16];
\end{axis}
\end{tikzpicture}
\hfill
\begin{tikzpicture}
\begin{axis}[
    width=0.24\textwidth, height=0.22\textwidth,
    title={$\sigma = 64$},
    grid=major, grid style={dotted, gray!40},
    xtick={0,2,4}, xticklabels={$1$, $10^2$, $10^4$},
    ytick={2,1,0,-1,-2}, yticklabels={$10^2$, $10^1$, $10^0$, $10^{-1}$, $10^{-2}$},
    tick label style={font=\tiny}, label style={font=\tiny},
]
\addplot[blue, thick, mark=*, mark options={fill=blue}, mark size=1pt]
    coordinates {(0,2.084)(1,1.642)(2,1.125)(3,0.661)(4,0.109)(5,-0.352)};
\addplot[blue!30, draw=none, name path=normHi_m64, forget plot]
    coordinates {(0,2.260)(1,1.762)(2,1.221)(3,0.757)(4,0.199)(5,-0.266)};
\addplot[blue!30, draw=none, name path=normLo_m64, forget plot]
    coordinates {(0,1.783)(1,1.477)(2,1.001)(3,0.538)(4,-0.004)(5,-0.459)};
\addplot[blue!30, opacity=0.5, forget plot] fill between[of=normHi_m64 and normLo_m64];
\addplot[green!60!black, thick, mark=triangle*, mark options={fill=green!60!black}, mark size=1pt]
    coordinates {(0,-0.006)(1,-0.252)(2,-0.757)(3,-1.240)(4,-1.753)(5,-2.231)};
\addplot[green!30, draw=none, name path=mnHi_m64, forget plot]
    coordinates {(0,-0.005)(1,-0.197)(2,-0.705)(3,-1.193)(4,-1.710)(5,-2.181)};
\addplot[green!30, draw=none, name path=mnLo_m64, forget plot]
    coordinates {(0,-0.008)(1,-0.315)(2,-0.815)(3,-1.293)(4,-1.801)(5,-2.288)};
\addplot[green!60!black, opacity=0.2, forget plot] fill between[of=mnHi_m64 and mnLo_m64];
\addplot[red!70, thick, mark=square*, mark options={fill=red!70}, mark size=1pt, dashed]
    coordinates {(0,0.276)(1,-0.236)(2,-0.746)(3,-1.241)(4,-1.796)(5,-2.250)};
\addplot[red!30, draw=none, name path=softHi_m64, forget plot]
    coordinates {(0,0.431)(1,-0.114)(2,-0.641)(3,-1.142)(4,-1.733)(5,-2.172)};
\addplot[red!30, draw=none, name path=softLo_m64, forget plot]
    coordinates {(0,0.034)(1,-0.405)(2,-0.884)(3,-1.369)(4,-1.870)(5,-2.346)};
\addplot[red!30, opacity=0.5, forget plot] fill between[of=softHi_m64 and softLo_m64];
\end{axis}
\end{tikzpicture}
\hfill
\begin{tikzpicture}
\begin{axis}[
    width=0.24\textwidth, height=0.22\textwidth,
    title={$\sigma = 256$},
    grid=major, grid style={dotted, gray!40},
    xtick={0,2,4}, xticklabels={$1$, $10^2$, $10^4$},
    ytick={3,2,1,0,-1,-2}, yticklabels={$10^3$, $10^2$, $10^1$, $10^0$, $10^{-1}$, $10^{-2}$},
    tick label style={font=\tiny}, label style={font=\tiny},
]
\addplot[blue, thick, mark=*, mark options={fill=blue}, mark size=1pt]
    coordinates {(0,2.707)(1,2.159)(2,1.650)(3,1.177)(4,0.639)(5,0.129)};
\addplot[blue!30, draw=none, name path=normHi_m256, forget plot]
    coordinates {(0,2.905)(1,2.274)(2,1.742)(3,1.306)(4,0.750)(5,0.227)};
\addplot[blue!30, draw=none, name path=normLo_m256, forget plot]
    coordinates {(0,2.334)(1,2.002)(2,1.532)(3,0.994)(4,0.490)(5,0.002)};
\addplot[blue!30, opacity=0.5, forget plot] fill between[of=normHi_m256 and normLo_m256];
\addplot[green!60!black, thick, mark=triangle*, mark options={fill=green!60!black}, mark size=1pt]
    coordinates {(0,-0.002)(1,-0.292)(2,-0.786)(3,-1.271)(4,-1.778)(5,-2.281)};
\addplot[green!30, draw=none, name path=mnHi_m256, forget plot]
    coordinates {(0,-0.001)(1,-0.236)(2,-0.737)(3,-1.219)(4,-1.718)(5,-2.238)};
\addplot[green!30, draw=none, name path=mnLo_m256, forget plot]
    coordinates {(0,-0.002)(1,-0.356)(2,-0.842)(3,-1.329)(4,-1.848)(5,-2.328)};
\addplot[green!60!black, opacity=0.2, forget plot] fill between[of=mnHi_m256 and mnLo_m256];
\addplot[red!70, thick, mark=square*, mark options={fill=red!70}, mark size=1pt, dashed]
    coordinates {(0,0.076)(1,-0.429)(2,-0.927)(3,-1.423)(4,-1.939)(5,-2.447)};
\addplot[red!30, draw=none, name path=softHi_m256, forget plot]
    coordinates {(0,0.194)(1,-0.311)(2,-0.837)(3,-1.343)(4,-1.849)(5,-2.374)};
\addplot[red!30, draw=none, name path=softLo_m256, forget plot]
    coordinates {(0,-0.086)(1,-0.592)(2,-1.042)(3,-1.522)(4,-2.054)(5,-2.535)};
\addplot[red!30, opacity=0.5, forget plot] fill between[of=softHi_m256 and softLo_m256];
\end{axis}
\end{tikzpicture}
 
\par\smallskip
\centering
\begin{tikzpicture}
    \draw[blue, thick] (0,0) -- (0.6,0);
    \fill[blue] (0.3,0) circle (1.5pt);
    \node[right, font=\small] at (0.6,0) {$\|L^{\star} - \widetilde{M}\|$};

    \draw[green!60!black, thick] (5.5,0) -- (6.1,0);
    \node[green!60!black] at (5.8,0) {$\blacktriangle$};
    \node[right, font=\small] at (6.1,0) {$\|L^{\star} - \widetilde{M}\|_{\max}$};

    \draw[red!70, thick, dashed] (11.5,0) -- (12.1,0);
    \fill[red!70] (11.8,0) +(-2pt,-2pt) rectangle +(2pt,2pt);
    \node[right, font=\small] at (12.1,0) {$\|T^{\star} - \widetilde{T}^{\star}\|_{\max}$};
\end{tikzpicture}

\caption{$\|L^{\star} - \widetilde{M}\|$, $\|L^{\star} - \widetilde{M}\|_{\max}$ and $\norm{T^\star - \widetilde T^\star}_{\max}$ as functions of the sample size $s$,
for four attention heads with various values of $\sigma$, for the models Llama-4-Scout-17B-16E and Mistral-7B.
Both axes show values on a log scale.
Shaded regions are mean$\pm$std across inputs.}
\label{fig:theoretical-validation-heads}
\end{figure}

\subsection{Model Training Experiments}\label{sec:model-training}
Next, we evaluate the effects of our approximation method on downstream model performance. 
To this end, we implemented a prototype of our positional LSH method for ALiBi in FlashAttention-2 \cite{daoflashattention}.\footnote{Code available online at: \url{https://github.com/DanielWolfsonTAU/positional_lsh}} 
We note that our implementation is not optimized for performance from an engineering perspective (see also the discussion of our limitations in \Cref{sec:conclusion}). The goal of the experiments in this section is to gain empirical insight into the approximation quality of our method rather than demonstrate efficiency gains. 

We use the Wikitext-103 dataset \cite{merity2016pointer} and train two publicly available models: 
\begin{itemize}
    \item Qwen3-0.6B \cite{qwen3technicalreport} for pre-training from scratch experiments.
    \item Mistral-7B \cite{jiang2023mistral} for fine-tuning experiments from its publicly available pre-trained checkpoint.
\end{itemize}

We evaluate perplexity (PPL) and next-token prediction accuracy (Acc), both on the original context length used in training and under context length extrapolation (where the model is evaluated on longer context lengths than it was trained on). 
Qwen3-0.6B is trained with an 8k context length and evaluated at an extrapolated length of 16k.
Mistral-7B is trained with a 4k context length and evaluated at extrapolated lengths of 8k and 16k. 

We evaluate Positional LSH with different sample sizes $s$. We compare it with the original models without positional bias and with ALiBi. 
As a baseline for comparison, we also include a ``fixed blocks'' baseline with a single block-diagonal mask matrix with a fixed block size of 512.

The results are displayed in \Cref{tab:results-qwen,tab:results-mistral-4k}. For both models, as the number of samples in Positional LSH increases, performance improves, approaching that of ALiBi.
Furthermore, Positional LSH outperforms the fixed blocks baseline in both models.
Finally, for Qwen, Positional LSH outperforms the original model, even with a small sample size. On Mistral-7B, on the other hand, neither positional bias method outperforms the original model in length extrapolation.


\begingroup
\setlength{\tabcolsep}{5.5pt} 
\begin{table}
  \caption{Length extrapolation results on Wikitext-103 for Qwen3-0.6B (trained from scratch)}
  \label{tab:results-qwen}
  \centering
  \small
  \begin{tabular}{lcccc}
    \toprule
    Context length: & \multicolumn{2}{c}{8k (training)} & \multicolumn{2}{c}{16k (extrapolation)} \\
    \cmidrule(lr){2-3} \cmidrule(lr){4-5}
    Method & PPL & Acc & PPL & Acc \\
    \midrule
    Original model                       & $20.213${\scriptsize$\pm 0.088$} & $44.796${\scriptsize$\pm 0.182$} & $21.316${\scriptsize$\pm 0.058$} & $43.652${\scriptsize$\pm 0.126$} \\
    ALiBi                                & $18.920${\scriptsize$\pm 0.009$} & $45.249${\scriptsize$\pm 0.035$} & $18.753${\scriptsize$\pm 0.021$} & $45.351${\scriptsize$\pm 0.020$} \\
    \midrule
    Positional LSH, $s=1$            & $19.217${\scriptsize$\pm 0.059$} & $45.051${\scriptsize$\pm 0.044$} & $19.278${\scriptsize$\pm 0.027$} & $44.985${\scriptsize$\pm 0.029$} \\
    Positional LSH, $s=10$           & $19.073${\scriptsize$\pm 0.054$} & $45.125${\scriptsize$\pm 0.025$} & $19.043${\scriptsize$\pm 0.070$} & $45.179${\scriptsize$\pm 0.021$} \\
    Positional LSH, $s=20$           & $19.050${\scriptsize$\pm 0.033$} & $45.146${\scriptsize$\pm 0.032$} & $18.961${\scriptsize$\pm 0.033$} & $45.192${\scriptsize$\pm 0.046$} \\
    \midrule
    Fixed blocks                         & $19.788${\scriptsize$\pm 0.054$} & $44.725${\scriptsize$\pm 0.058$} & $19.784${\scriptsize$\pm 0.041$} & $44.658${\scriptsize$\pm 0.067$} \\
    \bottomrule
  \end{tabular}
\vskip 1em
  \caption{Length extrapolation results on Wikitext-103 for Mistral-7B (fine-tuned)}
  \label{tab:results-mistral-4k}
  \centering
  \small
  \begin{tabular}{lcccccc}
    \toprule
    Context length: & \multicolumn{2}{c}{4k (training)} & \multicolumn{2}{c}{8k (extrapolation)} & \multicolumn{2}{c}{16k (extrapolation)} \\
    \cmidrule(lr){2-3} \cmidrule(lr){4-5} \cmidrule(lr){6-7}
    Method & PPL & Acc & PPL & Acc & PPL & Acc \\
    \midrule
    Original model                       & $6.347${\scriptsize$\pm 0.001$} & $58.371${\scriptsize$\pm 0.016$} & $6.273${\scriptsize$\pm 0.002$} & $58.523${\scriptsize$\pm 0.016$} & $6.270${\scriptsize$\pm 0.004$} & $58.501${\scriptsize$\pm 0.039$} \\
    ALiBi                                & $6.498${\scriptsize$\pm 0.008$} & $57.924${\scriptsize$\pm 0.142$} & $6.455${\scriptsize$\pm 0.022$} & $58.086${\scriptsize$\pm 0.134$} & $6.461${\scriptsize$\pm 0.021$} & $57.960${\scriptsize$\pm 0.207$} \\
    \midrule
    Positional LSH, $s=1$            & $6.957${\scriptsize$\pm 0.009$} & $56.807${\scriptsize$\pm 0.031$} & $6.941${\scriptsize$\pm 0.015$} & $56.926${\scriptsize$\pm 0.025$} & $6.923${\scriptsize$\pm 0.011$} & $56.975${\scriptsize$\pm 0.027$} \\
    Positional LSH, $s=5$            & $6.725${\scriptsize$\pm 0.013$} & $57.186${\scriptsize$\pm 0.051$} & $6.711${\scriptsize$\pm 0.019$} & $57.325${\scriptsize$\pm 0.038$} & $6.721${\scriptsize$\pm 0.012$} & $57.235${\scriptsize$\pm 0.058$} \\
    Positional LSH, $s=10$           & $6.698${\scriptsize$\pm 0.010$} & $57.280${\scriptsize$\pm 0.026$} & $6.683${\scriptsize$\pm 0.011$} & $57.389${\scriptsize$\pm 0.028$} & $6.705${\scriptsize$\pm 0.034$} & $57.284${\scriptsize$\pm 0.021$} \\
    \midrule
    Fixed blocks                         & $7.352${\scriptsize$\pm 0.014$} & $54.919${\scriptsize$\pm 0.022$} & $7.327${\scriptsize$\pm 0.017$} & $54.944${\scriptsize$\pm 0.019$} & $7.378${\scriptsize$\pm 0.015$} & $54.930${\scriptsize$\pm 0.013$} \\
    \bottomrule
  \end{tabular}
\end{table}
\endgroup

\paragraph{Experimental details.} 
Experiments were run on an NVIDIA H100 Tensor Core GPU (with 80GB VRAM).
The AdamW optimizer \cite{loshchilov2017decoupled} was used for training, with $\beta_1 = 0.9$, $\beta_2 = 0.999$, and $\epsilon = 10^{-8}$.
The hyperparameters were selected via evaluations performed on three different values for each parameter, spanning conventional value ranges.
For Qwen3-0.6B, a learning rate of $5 \times 10^{-4}$ is used, while for Mistral-7B the learning rate is $1 \times 10^{-4}$. The remaining
hyperparameters are shared across both models: 5 epochs, a weight decay of
0.01, and a warmup ratio of 0.1. 
A batch size of 1 is used for training and evaluation in both models, as larger batch sizes would exceed GPU memory limits and would compromise context length. 
The results of \Cref{sec:theoretical-validation} are averaged over 3 independent runs. 
Mistral-7B was fine-tuned using LoRA \cite{hu2021lora} with the following configuration: rank 16, $\alpha = 32$, dropout 0.05, no bias, targeting the query, key, value, and output projection matrices.

\section{Related Work}\label{sec:related}

Positional encodings and relative-position-aware attention in transformers are widely studied. 
Apart from RoPE, a central line of work injects positional information directly into attention through relative-position terms. Shaw et al.~\cite{shaw2018self} introduced relative position representations inside self-attention, and T5 \cite{raffel2020exploring} used a simplified scalar relative bias added to attention logits. Fixed and kernelized bias schemes for length extrapolation include ALiBi and KERPLE~\cite{chi2022kerple}, while FIRE \cite{lifunctional} provides a unified functional view. Jelassi et al.~\cite{jelassi2024repeat} also studied hard-ALiBi, a binary sliding-window variant with head-dependent window sizes, while Kazemnejad et al.~\cite{kazemnejad2023impact} studied NoPE, i.e., using no positional encoding at all. 
Our work is complementary to these methods: rather than proposing a new positional bias family, we propose a structural and algorithmic analysis of ALiBi.

From an efficiency perspective, exact attention with bias has recently been optimized at the systems level, including ALiBi-FlashAttention \cite{pli2024alibi} for exact computation and FlashBias \cite{wuflashbias} via a low-rank decomposition. Finally, LSH has a long history in similarity search \cite{indyk1998approximate,gionis1999similarity,charikar2002similarity,andoni2015practical} and kernel methods \cite{charikar2017hashing,siminelakis2019rehashing,charikar2020kernel,coleman2020sub}, and has more recently been used for efficient attention and KV-cache compression \cite{kitaev2020reformer,han2024hyperattention,chenmagicpig,indykimproved,desaihashattention,joshirace}. Prior LSH-based attention work hashes token representations or semantic content; in contrast, we use LSH in a different way by hashing positions.

\section{Conclusions and Limitations}\label{sec:conclusion}

We introduced a positional LSH framework for attention with bias, establishing a formal connection between positional bias, binary attention masks, and positional embeddings. For ALiBi, this yields both a structural approximation of the bias matrix by randomized contiguous block masks and a uniform near-linear time approximation theorem for ALiBi-biased attention. Our experiments support this approximation perspective: as the number of samples increases, the approximation improves and downstream behavior approaches exact ALiBi.
Our framework is more general than ALiBi and may extend to other fixed positional bias kernels.

Our contribution is theoretical and algorithmic rather than a systems result, and our experiments are of limited scale meant only to validate the theory. Although our analysis gives a near-linear asymptotic algorithm, we do not demonstrate wall-clock speedups over hardware-optimized exact ALiBi implementations. Some of this gap may be due to our implementation being under-optimized, but it likely also reflects the fact that current hardware and low-level kernels are highly optimized for large dense operations, so asymptotic gains from decomposing attention into many smaller local computations need not translate into speedups at the context lengths we experimented with. Identifying regimes where the favorable asymptotics of positional LSH lead to practical gains may involve much longer context lengths and is left an intriguing direction for future work.

\section*{Acknowledgements}
This research was supported by the  
Israeli Ministry of Innovation, Science \&
Technology, 
by Len Blavatnik and the
Blavatnik Family foundation, and by an Alon Scholarship. 

\bibliographystyle{amsalpha}
\bibliography{positionallsh.bib}

\end{document}